\def\shownotes{0}  
\newcommand{\authnote}[2]{\noindent$^{\text{\fontfamily{cmtt}\em #1:}}\langle${\sf\small #2}$\rangle$}
\newcommand{\authnote}[2]{}
\icmltitlerunning{Interaction-Grounded Learning}
\begin{document}

\twocolumn[
\icmltitle{Interaction-Grounded Learning}



\icmlsetsymbol{equal}{*}

\begin{icmlauthorlist}
\icmlauthor{Tengyang Xie}{uiuc}
\icmlauthor{John Langford}{msr}
\icmlauthor{Paul Mineiro}{msr}
\icmlauthor{Ida Momennejad}{msr}
\end{icmlauthorlist}

\icmlaffiliation{uiuc}{University of Illinois at Urbana-Champaign}
\icmlaffiliation{msr}{Microsoft Research, New York City}

\icmlcorrespondingauthor{Tengyang Xie}{tx10@illinois.edu}
\icmlcorrespondingauthor{John Langford}{jcl@microsoft.com}
\icmlcorrespondingauthor{Paul Mineiro}{pmineiro@microsoft.com}
\icmlcorrespondingauthor{Ida Momennejad}{idamo@microsoft.com}

\icmlkeywords{Machine Learning, ICML}

\vskip 0.3in
]



\printAffiliationsAndNotice{}  



\begin{abstract}


Consider a prosthetic arm, learning to adapt to its user's control signals. We propose \emph{Interaction-Grounded Learning} for this novel setting, in which a learner's goal is to interact with the environment with no grounding or explicit reward to optimize its policies. Such a problem evades common RL solutions which require an explicit reward. The learning agent observes a multidimensional \emph{context vector}, takes an \emph{action}, and then observes a multidimensional \emph{feedback vector}. This multidimensional feedback vector has \emph{no} explicit reward information. In order to succeed, the algorithm must learn how to evaluate the feedback vector to discover a latent reward signal, with which it can ground its policies without supervision. We show that in an Interaction-Grounded Learning setting, with certain natural assumptions, a learner can discover the latent reward and ground its policy for successful interaction. We provide theoretical guarantees and a proof-of-concept empirical evaluation to demonstrate the effectiveness of our proposed approach.

%
%
%

\end{abstract}


\section{Introduction}
We consider a novel setting. A learner's goal is to interact with an environment, and while the environment reacts to the learner's actions, its feedback does not provide an explicit reward signal. Because the learner must deduce a grounding for the feedback solely via interaction, we call this setting \emph{Interaction-Grounded Learning} (IGL).

There are \emph{many} examples and potential applications of Interaction-Grounded Learning.  In a visual domain, a robot could learn to interact effectively with a user's personalized hand gestures.  In an audio domain, a smart speaker could learn to give useful responses based upon a user's idiosyncratic ways of signaling pleasure or annoyance. In a BCI (Brain Computer Interface) setting, a computer could learn to interact with a human based upon a user's EEG signals.

These problems are not easily solved using traditional reinforcement learning (RL), inverse RL, or supervised learning because the \emph{absence of an explicit reward} is an essential ambiguity of the setting.  When solving these problems, a key issue is agreeing on a shared code between a human and a computer.  For example, in neurofeedback and BCI an algorithm is often trained via supervised learning techniques to interpret brain signals, which are in turn used to interact with or train human participants ~\cite{BCI_arm, closedloop2015, deBettencourt2015, neurofeedback2020, Ireti, Ivan}.  The supervised learning techniques used, however, tend to be laborious and can require chronic retraining.  Another challenge of BCI solutions is that, over time, the initial placement of sensors that read user signals, as well as the interpretation of the signals, often change and require re-calibration. IGL opens up the possibility of more natural and continual self-calibration.

\begin{figure}[thb]
    \hskip -0.1in
    \includegraphics[width=1.2\columnwidth]{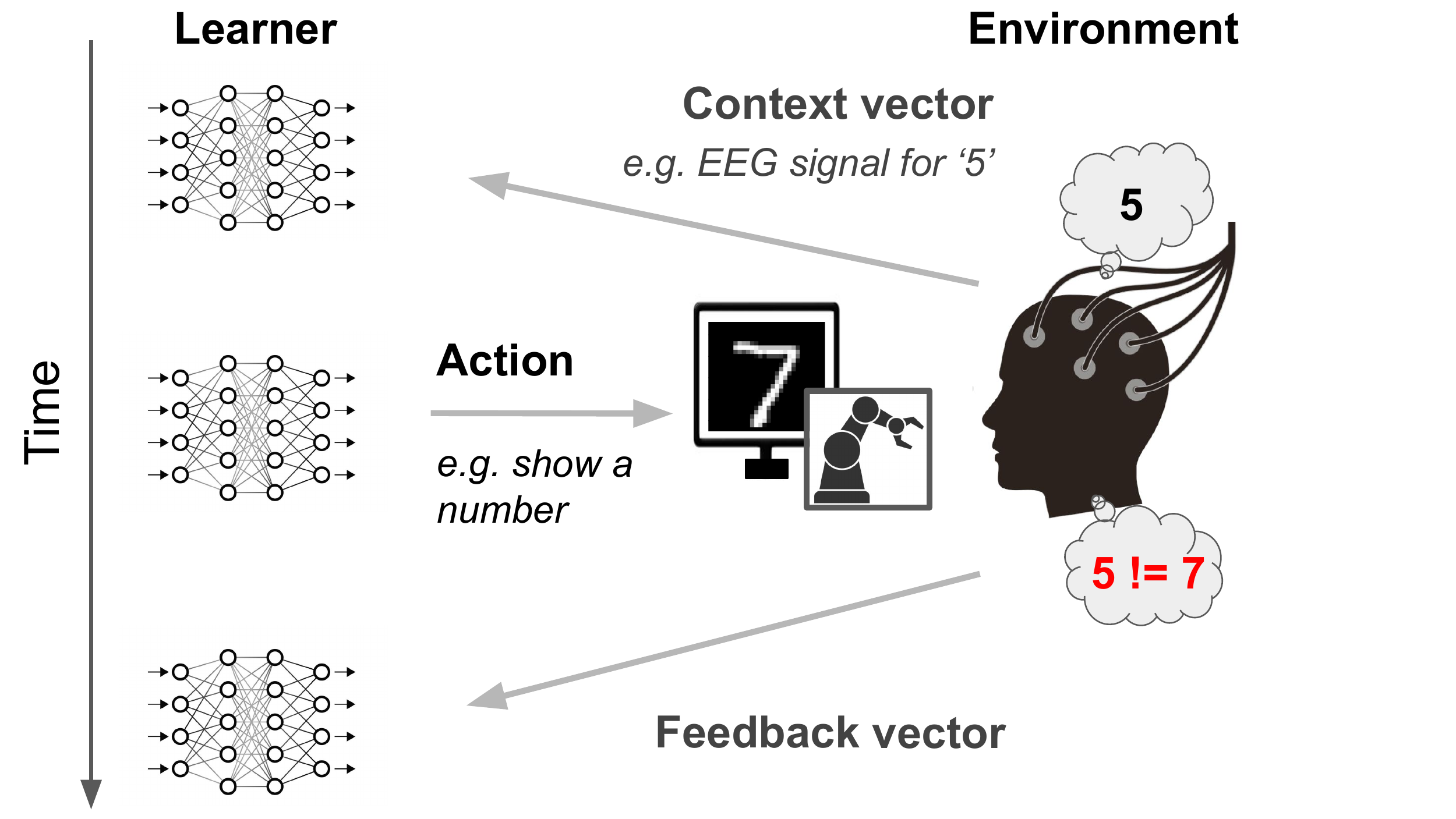}
    \caption{A schematic example of the Interaction-Grounded Learning (IGL) setting. The learner observes a context  vector (e.g. EEG signal of interacting partner thinking about number 5 or intention to grab a cup), takes an action (e.g. show number 7 or move arm), and observes a feedback vector (e.g. an ungrounded multidimensional EEG signal). Importantly, there is \emph{no explicit reward} signal. The learner assumes there is a latent reward in the feedback vector then learns a reward decoder and an optimal action policy given those assumptions.}
    \vspace{-0.3in}
    \label{fig:IGL_setting}
\end{figure}

\begin{figure*}[htb]
    \centering
    \begin{subfigure}[b]{0.32\linewidth}
        \centering
        \includegraphics[width=0.625\linewidth,page=1]{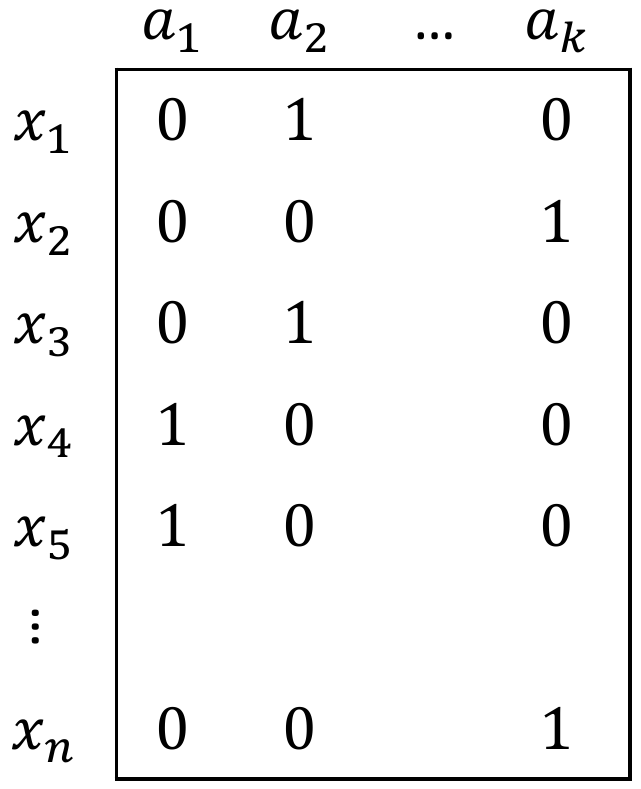}
        \caption{Supervised Classification}
        \label{fig:supln}
    \end{subfigure}
    ~
    \begin{subfigure}[b]{0.32\linewidth}
        \centering
        \includegraphics[width=0.625\linewidth,page=2]{figures/bci.pdf}
        \caption{Contextual Bandits (CBs)}
        \label{fig:cbs}
    \end{subfigure}
    ~
    \begin{subfigure}[b]{0.32\linewidth}
        \centering
        \includegraphics[width=0.625\linewidth,page=3]{figures/bci.pdf}
        \caption{Interaction-Grounded Learning (IGL)}
        \label{fig:bci}
    \end{subfigure}
    \caption{An example of different learning approaches. {\bf Figure \ref{fig:supln}:} Supervised learning assumes the full reward information is given for each context. {\bf Figure \ref{fig:cbs}:} Contextual bandits gives exact reward information on the selected action. {\bf Figure \ref{fig:bci}:} In Interaction-Grounded Learning, a feedback vector is observed instead of a reward.}
    \label{fig:diff_setting}
\end{figure*}

In the IGL setting, the learner observes a multidimensional \emph{context vector}, takes an \emph{action}, and then observes a multidimensional \emph{feedback vector}. This feedback vector has \emph{no} explicit reward information, but does carry information about a latent reward.   In order to succeed, the learning algorithm must discover a good grounding for the feedback vector suitable for evaluating interactive policies. Interaction-Grounded Learning can then empower the agent to interpret multi-dimensional user signals in terms of latent reward, and optimize its behaviour using this inferred reward.  While this may appear impossible at first, we prove that Interaction-Grounded Learning can succeed when three assumptions hold: (I) the feedback vector has information about the latent reward, (II) the feedback vector is conditionally independent of the action and context given the reward, and (III) random actions have low expected reward. 

After introducing the Interaction-Grounded Learning setting in section~\ref{sec:setting} with more detail, we propose a potential algorithm to solve IGL: Explore-Exploit Ground learning, or \algoname. The \algoname\ learner takes random actions during gradually increasing exploration epoch. Later during intermittent exploitation, \algoname\  grounds the reward in its interaction history during the exploration epochs. We prove that \algoname\ can solve IGL under the assumptions mentioned above.

In section~\ref{sec:discussion} we further discuss these assumptions, their applicability, and potential relaxation.

\subsection*{Our contribution}

We define the Interaction-Grounded Learning setting in section~\ref{sec:setting}. Given the scope of potential applications, we believe this setting may form a core area of study in the future. 

Section~\ref{sec:LfCI} studies the feasibility of IGL in a simplest-possible batch setting, clarifying the assumptions under which it is tractable and providing a proof that it is indeed possible.

The batch setting appears unnatural in most IGL applications where a more online approach is called for. Therefore, in section~\ref{sec:IA} we present \algoname, an algorithm for online Interaction-Grounded Learning. We prove that \algoname succeeds under similar assumptions to the simple batch setting.  In section~\ref{sec:experiments} we conduct proof-of-concept experiments showing IGL is possible in both the batch and online cases.

We then consider an alternatives to IGL, i.e., an unsupervised learning approach to extracting rewards. In section~\ref{sec:failunsup} we show a scenario, in which unsupervised learning cannot succeed without additional assumptions.  The key insight is that the distribution of feedback vectors has multiple natural clusterings, which correspond to the solutions of different IGL problems.  Restated, any unsupervised learning approach that can succeed on one instance of a problem must fail on another instance of the problem, while the IGL approaches we discuss here can succeed on both.

\section{Problem Definition}
\label{sec:setting}

We propose and analyze the \emph{Interaction-Grounded Learning} setting, in which the learner uses interaction to create a grounding for evaluation and optimization of a feedback vector.  Each round, the stationary environment generates an i.i.d. context $x \in \Xcal$ from a distribution $d_0$ and reveals it to the learner, which chooses an action $a \in \Acal$ from a finite action set ($|\Acal| = K$); the environment then generates an unobserved binary reward $r \in \{0, 1\}$ and a feedback vector $y \in \Ycal$ conditional on $(x, a)$, and reveals $y$ to the learner.  The reward can be either deterministic or stochastic, and we denote $R(x,a) \coloneqq \E[r|x,a]$.  In this setting, the spaces of both context $\Xcal$ and feedback vector $\Ycal$ can be uncountably rich.

We use $\pi \in \Pi: \Xcal \to \Delta(\Acal)$ to denote a (stochastic) policy, and we define the expected return of policy $\pi$ as $V(\pi) \coloneqq \E_{(x,a) \sim d_0 \times \pi}[R(x,a)]$.  Learning aims to achieve low regret with respect to the optimal policy in the policy class $\pi$,
\begin{align}
\label{eq:pistar}
\pi^\star \coloneqq &~ \argmax_{\pi \in \Pi} V(\pi),
\end{align}
while interacting with the environment only through observations of  context-action-feedback $(x,a,y)$ triples.


The Interaction-Grounded Learning setting extends the border of interactive machine learning. Figure~\ref{fig:diff_setting} is an example comparison of supervised learning, contextual bandits, and Interaction-Grounded Learning. In this case, an image of exact reward is provided as feedback vector $y$.  This example does not exhaust the expressiveness of our generative model, which allows $(r, y)$ to be drawn jointly conditional on $(x, a)$.  However, in Section~\ref{sec:LfCI} we impose further assumptions to make progress, and Figure~\ref{fig:diff_setting} is consistent with those assumptions.




\section{Learning with Conditional Independence}
\label{sec:LfCI}

Direct empirical estimation of $V(\pi)$ requires access to unobserved information, frustrating the application of traditional techniques.  Our algorithms instead employ a reward decoder class $\psi \in \Psi: \Ycal \to [0, 1]$.  Treating $\psi(y)$ as an approximation to $\mathbb{E}\left[r | y\right]$ motivates the decoded reward $V(\pi, \psi) \doteq \E_{(x,a) \sim d_0 \times \pi}[\psi(y)]$.  Our algorithms jointly choose $\psi$ and $\pi$ to maximize the decoded reward. The main challenge of learning the best $\pi$ and $\psi$ jointly over $\Pi$ and $\Psi$ is that the reward $r$ is unobserved.  Under what conditions does maximizing the decoded reward ensure low regret with respect to the unobserved latent reward?  We leverage the following assumption to enable success.

\begin{assumption}[Conditional Independence]
\label{asm:condind}
For arbitrary $(x,a,r,y)$ tuple where $r$ and $y$ are generated based on a context $x$ and action $a$, we assume the feedback vector $y$ is conditionally independent of action $a$ and context $x$ given latent reward $r$.  In other words we assume that $x,a \indep y | r$.
\end{assumption}

Assumption \ref{asm:condind} ensures that the feedback vector $y$ is generated only based on the latent reward $r$, without further dependence on the action $a$ or the context $x$.  As discussed in section~\ref{sec:discussion}, this assumption is reasonable for some problems. While the assumption may seem unreasonable for others, it could be satisfied by existing orthogonalization practices in BCI (which are applied prior to applying machine learning) and could conceivably be relaxed in future work. Informally this assumption enables progress on the learning objective by ensuring that the mistakes of a reward predictor have a uniform effect across the policy class. This allows the decoded latent reward to be a faithful representation of the expected return of a policy.  We carefully elaborate this argument below.

\subsection{Proxy Learning Objective}
\label{sec:sol_concept}

Our target is to find a $\pi$ that maximizes $V(\pi)$ and we know $V(\pi,\psi)$ can be viewed as an estimation of $V(\pi)$ using $\psi$  so $V(\pi,\psi)$ is a natural simple objective.
However, if we consider maximizing $V(\pi,\psi)$ directly over $\Pi$ and $\Psi$, two difficulties could arise:
\begin{enumerate}[(i),topsep=1pt,parsep=1pt,itemsep=1pt,partopsep=1pt]
\item $\psi$ converges to the trivial wrong solution of $\psi(y) \to 1, \forall y \in \Ycal$, when maximizing $V(\pi,\psi)$ directly.
\item $V(\pi,\psi)$ does not necessarily correspond to the true value of $V(\pi)$. For example, if $\psi$ always decodes the feedback vector opposite to the truth, then $V(\pi,\psi)$ decreases when $V(\pi)$ increases. 
\end{enumerate}

To address these difficulties, we propose to maximize the estimated value difference from a policy $\pi_{\bad}$ known to have low expected return.  For example, a policy which chooses actions uniformly at random has an expected accuracy of $1/K$ on classification problems.  Via this policy we define the learning objective $\mathcal{L}(\pi, \psi)$ and optimal policy-decoder pair as
\begin{equation} 
\label{eq:popobj}
\argmax_{(\pi,\psi) \in \Pi \times \Psi} \Lcal(\pi,\psi) \coloneqq V(\pi,\psi) - V(\pi_{\bad},\psi).
\end{equation}
Expanding the objective for a fixed $(\pi, \psi)$ pair reveals
\begin{align}
&~ V(\pi,\psi) - V(\pi_{\bad},\psi)
\\
= &~ \E_{(x,a) \sim d_0 \times \pi}[\psi(y)] - \E_{(x,a) \sim d_0 \times \pi_{\bad}}[\psi(y)]
\\
= &~ \E_{(x,a) \sim d_0 \times \pi}[\psi(y)\1(r = 1) + \psi(y)\1(r = 0)]\\
&~ - \E_{(x,a) \sim d_0 \times \pi_{\bad}}[\psi(y)\1(r = 1) + \psi(y)\1(r = 0)]
\\
\overset{\text{(a)}}{=} &~ V(\pi)\mathbb{E}\left[\psi(y)|r=1\right] + (1 - V(\pi))\mathbb{E}\left[\psi(y)|r=0\right]
\\
&~ - V(\pi_{\bad})\mathbb{E}\left[\psi(y)|r=1\right]
\\ &~ - (1 - V(\pi_{\bad}))\mathbb{E}\left[\psi(y)|r=0\right]
\\
= &~ \left( V(\pi) - V(\pi_{\bad}) \right) (\mathbb{E}\left[\psi(y)|r=1\right] - \mathbb{E}\left[\psi(y)|r=0\right])
\\
\label{eq:anapopobj}
\doteq &~ \left( V(\pi) - V(\pi_{\bad}) \right) \Delta\psi,
\end{align}
where (a) leverages the conditional independence property in Assumption \ref{asm:condind}.

Equation~\eqref{eq:anapopobj} reveals our learning objective is a linear transformation of the unobservable quantity of interest, with intercept $V(\pi_{\bad})$ and slope $\Delta \psi \doteq (\mathbb{E}\left[\psi(y)|r=1\right] - \mathbb{E}\left[\psi(y)|r=0\right])$.  Importantly, the slope $\Delta \psi$ is independent of $\pi$, implying a single reward predictor  induces the correct ordering over all policies whose value exceeds that of $\pi_{\bad}$.  However, policies which are worse than $\pi_{\bad}$ can be ordered incorrectly, not just amongst themselves but also relative to policies that are insufficiently better than $\pi_{\bad}$. We address this next, leading to the second assumption.

\paragraph{Quality of the Reward Decoder}  Using exact expectations, any reward decoder with $\Delta \psi > 0$ induces a correct ordering over policies whose value exceeds that of $\pi_{\bad}$.  With finite sample approximations, however, a small value of $\Delta \psi$ makes this harder, resulting in increased sample complexity.  Therefore we use the slope $\Delta \psi \doteq (\mathbb{E}\left[\psi(y)|r=1\right] - \mathbb{E}\left[\psi(y)|r=0\right])$ to measure the quality of the reward decoder, and we define the optimal reward decoder $\psi^\star$ via
\begin{align}
\label{eq:psistar}
\psi^\star \coloneqq &~ \argmax_{\psi \in \Psi} \Delta\psi.
\end{align}

\paragraph{Identifiability}
Since $(V(\pi) - V(\pi_{\bad}))$ and $\Delta\psi$ are not always positive, there are potentially two extrema of equation~\eqref{eq:popobj}, one corresponding to the best policy $(V(\pi) > V(\pi_{\bad}))$ coupled with the optimal reward decoder $(\Delta \psi > 0)$, and one corresponding to the worst policy $(V(\pi) < V(\pi_{\bad}))$ coupled with the worst reward decoder $(\Delta \psi < 0)$.  To ensure the desired extrema is highest value, we make the following assumption.

\begin{assumption}[identifiability]
\label{asm:pistarstar}
There exists a constant $\eta > 0$ such that $\pi^\star$ and $\psi^\star$ satisfy
\begin{align}
\left(V(\pi^\star) - V(\pi_{\bad})\right)\Delta\psi^\star \geq V(\pi_{\bad}) + \eta.
\end{align}
\end{assumption}

\begin{remark}
\label{rmk:diff2}
Assumption \ref{asm:pistarstar} assumes that the ``incorrect'' optimization direction always achieves less $\Lcal(\pi,\psi)$ value than the ``correct'' direction, which corresponds to a random action being wrong more often than not. This can be demonstrated as follows.
Let $\pi^\dagger \coloneqq \argmin_{\pi \in \Pi} V(\pi)$ and $\psi^\dagger \coloneqq \argmin_{\psi \in \Psi} \Delta \psi$.
Since $\min_{\pi \in \Pi} V(\pi) \geq 0$ and $\min_{\psi \in \Psi} \Delta \psi \geq -1$, we have $V(\pi^\dagger,\psi^\dagger) - V(\pi_{\bad},\psi^\dagger) = ( V(\pi^\dagger) - V(\pi_{\bad}) ) \Delta\psi^\dagger \leq V(\pi_{\bad})$. Thus, Assumption \ref{asm:pistarstar} ensures $(\pi^\star, \psi^\star)$ to be the only global optima of objective Eq.\eqref{eq:popobj}, and the non-zero gap $\eta$ allows learning with finite samples to occur.
\end{remark}

The requirement from Assumption~\ref{asm:pistarstar} can be viewed as: $\pi_{\bad}$ must be ``sufficiently bad''.
For example, a policy $\pi_{\bad}$ that chooses actions uniformly at random applied on a classification task becomes increasingly bad as $K$ increases, but is never sufficiently bad when $K = 2$ because $V(\pi_{\bad}) \geq V(\pi^*)/K$ and $\Delta \psi^* \leq 1$.

\subsection{Sample Complexity}
We now provide the finite-sample results for batch-style optimization of the objective in equation \eqref{eq:popobj} with empirical data $\Dcal$. Let $\Dcal$ consists of $n$ i.i.d.~$(x,a,y)$ samples, where $(x,a)$ is generated from distribution $d(\cdot,\cdot)$. We also define $\Vhat_\Dcal(\pi,\psi) \coloneqq \frac{1}{n}\sum_{i = 1}^{n}\frac{\pi(a_i|x_i)}{d(a_i|x_i)}\psi(y_i)$ to be the estimated $V(\pi,\psi)$ using $\Dcal$.
\begin{theorem}
\label{thm:batchresult}
Let $(\pihat,\psihat) \coloneqq \argmax_{(\pi,\psi) \in \Pi \times \Psi} \Vhat_\Dcal(\pi,\psi) - \Vhat_\Dcal(\pi_{\bad},\psi)$ and
\begin{align}
\varepsilon_{\stat,n} \coloneqq &~ \sqrt{\frac{4 \max_{\pi \in \Pi}\left\|\frac{\pi(a|x)}{d(a|x)}\right\|_{2,d} \log\frac{2 |\Pi||\Psi|}{\delta} }{n}}
\\
&~ + \frac{\max_{(x,a) \in \Xcal \times \Acal} \frac{1}{d(a|x)} \log\frac{2 |\Pi||\Psi|}{\delta}}{3n}.
\end{align}
Under Assumption \ref{asm:condind} and \ref{asm:pistarstar}, if $n$ is sufficiently large such that $\varepsilon_{\stat,n} \leq \eta/2$ where $\eta$ is defined in Assumption \ref{asm:pistarstar}, then with probability $1 - \delta$:
\begin{align}
V(\pi^\star) - V(\pihat) \leq &~ \frac{2 \varepsilon_{\stat,n}}{\Delta\psi^\star}
\\
\Delta \psi^\star - \Delta \psihat \leq &~ \frac{2 \varepsilon_{\stat,n}}{V(\pi^\star) - V(\pi_{\bad})}.
\end{align}
\end{theorem}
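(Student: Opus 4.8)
The plan is to prove the theorem in three stages: a uniform Bernstein concentration bound, an identifiability argument that pins down which ``branch'' the empirical optimizer falls into, and a short algebraic manipulation that turns the resulting sub-optimality of $\hat{\Lcal}_\Dcal(\pihat,\psihat)$ into the two claimed inequalities. For the first stage, observe that $\hat{\Lcal}_\Dcal(\pi,\psi) \coloneqq \Vhat_\Dcal(\pi,\psi) - \Vhat_\Dcal(\pi_{\bad},\psi) = \frac1n\sum_i\bigl(\frac{\pi(a_i|x_i)}{d(a_i|x_i)} - \frac{\pi_{\bad}(a_i|x_i)}{d(a_i|x_i)}\bigr)\psi(y_i)$, and by the tower rule over $y_i \mid (x_i,a_i)$ each summand has conditional mean $V(\pi,\psi) - V(\pi_{\bad},\psi) = \Lcal(\pi,\psi)$ — this step uses only $(x,a)\sim d$, not Assumption~\ref{asm:condind}, which I invoke solely through the already-derived identity~\eqref{eq:anapopobj}, $\Lcal(\pi,\psi) = (V(\pi) - V(\pi_{\bad}))\Delta\psi$. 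Since $\psi(y)\in[0,1]$ and both $\pi,\pi_{\bad}$ are policies, each summand has range at most $\max_{(x,a)}\frac1{d(a|x)}$ and variance at most $\mathbb{E}_d\bigl[(\frac{\pi(a|x)}{d(a|x)} - \frac{\pi_{\bad}(a|x)}{d(a|x)})^2\bigr] \le 4\max_{\pi\in\Pi}\mathbb{E}_d\bigl[(\frac{\pi(a|x)}{d(a|x)})^2\bigr]$ (using $(u-v)^2\le 2u^2+2v^2$, which accounts for the factor $4$); Bernstein's inequality together with a union bound over the finite class $\Pi\times\Psi$ then yields, with probability at least $1-\delta$, $\sup_{(\pi,\psi)\in\Pi\times\Psi}|\hat{\Lcal}_\Dcal(\pi,\psi) - \Lcal(\pi,\psi)| \le \varepsilon_{\stat,n}$.

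For the second stage, condition on this event. Because $(\pihat,\psihat)$ maximizes $\hat{\Lcal}_\Dcal$, the chain $\Lcal(\pihat,\psihat) \ge \hat{\Lcal}_\Dcal(\pihat,\psihat) - \varepsilon_{\stat,n} \ge \hat{\Lcal}_\Dcal(\pi^\star,\psi^\star) - \varepsilon_{\stat,n} \ge \Lcal(\pi^\star,\psi^\star) - 2\varepsilon_{\stat,n} \ge V(\pi_{\bad}) + \eta - 2\varepsilon_{\stat,n} \ge V(\pi_{\bad})$ holds, the last two steps using Assumption~\ref{asm:pistarstar} and $\varepsilon_{\stat,n}\le\eta/2$. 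Hence $(V(\pihat) - V(\pi_{\bad}))\Delta\psihat \ge V(\pi_{\bad})$; but as noted in Remark~\ref{rmk:diff2}, the ``wrong'' branch $V(\pihat) < V(\pi_{\bad}),\ \Delta\psihat < 0$ forces this product below $V(\pi_{\bad})$, so we must instead be in the branch $V(\pihat) > V(\pi_{\bad})$ and $\Delta\psihat > 0$ (the degenerate equality case is ruled out when the inequality $\varepsilon_{\stat,n}\le\eta/2$ is strict). The same reasoning, together with the definitions~\eqref{eq:pistar} and~\eqref{eq:psistar} of $\pi^\star$ and $\psi^\star$, gives $A \coloneqq V(\pi^\star) - V(\pi_{\bad}) > 0$ and $B \coloneqq \Delta\psi^\star > 0$.

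For the third stage, write $a \coloneqq V(\pihat) - V(\pi_{\bad})$ and $b \coloneqq \Delta\psihat$, which the previous paragraph shows are positive. Since $\pi^\star$ maximizes $V$ over $\Pi$ we have $a \le A$, and since $\psi^\star$ maximizes $\Delta\psi$ over $\Psi$ we have $b \le B$; moreover the chain above, combined with $\Lcal(\pihat,\psihat) = ab$ and $\Lcal(\pi^\star,\psi^\star) = AB$, gives $ab \ge AB - 2\varepsilon_{\stat,n}$. Then $AB - 2\varepsilon_{\stat,n} \le ab \le aB$ yields $a \ge A - 2\varepsilon_{\stat,n}/B$, i.e.\ $V(\pi^\star) - V(\pihat) \le 2\varepsilon_{\stat,n}/\Delta\psi^\star$; and $AB - 2\varepsilon_{\stat,n} \le ab \le Ab$ yields $b \ge B - 2\varepsilon_{\stat,n}/A$, i.e.\ $\Delta\psi^\star - \Delta\psihat \le 2\varepsilon_{\stat,n}/(V(\pi^\star) - V(\pi_{\bad}))$, which are exactly the two claimed bounds.

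I expect the main obstacle to be the choice of comparison in the third stage: comparing $(\pihat,\psihat)$ directly against $(\pi^\star,\psihat)$ or against $(\pihat,\psi^\star)$ would leave $\Delta\psihat$ or $V(\pihat) - V(\pi_{\bad})$ in the denominators, giving strictly weaker bounds; the trick is to use the \emph{full} sub-optimality $ab \ge AB - 2\varepsilon_{\stat,n}$ relative to the population optimum $(\pi^\star,\psi^\star)$ together with the \emph{coordinatewise} maximality $a\le A$, $b\le B$ to get $\pi^\star$ and $\psi^\star$ into the denominators. The remaining points — getting the Bernstein constants and the $(\pi/d - \pi_{\bad}/d)$ variance bookkeeping exactly right in the first stage, and cleanly dispatching the degenerate boundary case in the identifiability argument — are routine.
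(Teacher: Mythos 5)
Your proposal is correct and follows essentially the same route as the paper's proof: a Bernstein-plus-union-bound uniform deviation bound of size $\varepsilon_{\stat,n}$, an identifiability step using the optimality of $(\pihat,\psihat)$ together with Assumption~\ref{asm:pistarstar} and the bounds $V(\pi)\ge 0$, $\Delta\psi\ge -1$ to force $\Delta\psihat>0$ and $V(\pihat)>V(\pi_{\bad})$ (the paper packages this as Lemma~\ref{lem:deltapsi}), and then the same final algebra via Eq.~\eqref{eq:anapopobj}, substituting $\Delta\psihat\le\Delta\psi^\star$ and $V(\pihat)\le V(\pi^\star)$ into $ab\ge AB-2\varepsilon_{\stat,n}$ to get the two bounds. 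The residual constant bookkeeping and the strict-versus-nonstrict $\varepsilon_{\stat,n}\le\eta/2$ boundary case you flag are handled no more carefully in the paper itself.
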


\begin{proof}[\bf\em Proof Sketch]
We show that $\varepsilon_{\stat,n} \leq \eta/2$ is a sufficient condition of $\Delta \psihat > 0$ in Lemma \ref{lem:deltapsi} (in Appendix \ref{sec:proof}).
When we have $\Delta \psihat > 0$,  $\psihat$ will not decode the opposite reward.  Combining the facts of $(\pihat,\psihat) = \argmax_{(\pi,\psi) \in \Pi \times \Psi} \Vhat_\Dcal(\pi,\psi) - \Vhat_\Dcal(\pi_{\bad},\psi)$, $(\pi^\star,\psi^\star) = \argmax_{(\pi,\psi) \in \Pi \times \Psi} V(\pi,\psi) - V(\pi_{\bad},\psi)$, and Eq.\eqref{eq:anapopobj} yield the bound on $V(\pihat)$ and $\Delta\psihat$.
The detailed proof of Theorem \ref{thm:batchresult} is provided in Appendix \ref{sec:proof}.
\end{proof}

\begin{remark}
\label{rmk:batchidfdiff}
As we show in Theorem \ref{thm:batchresult} and its proof, maximizing $\Vhat_\Dcal(\pi,\psi) - \Vhat_\Dcal(\pi_{\bad},\psi)$ converges to the right direction only after we have sufficient data. 
This reflects the difficulty (ii) discussed at the beginning of Section \ref{sec:sol_concept} and Remark \ref{rmk:diff2}. The condition of $\varepsilon_{\stat,n} \leq \eta/2$ provides a concrete sample complexity requirement to guarantee identifiability of $(\pi^\star,\psi^\star)$, according to Assumption \ref{asm:pistarstar}.
\end{remark}


\section{Interactive Algorithms}
\label{sec:IA}

We now present an interactive algorithm for IGL.  Similar to the epoch-greedy algorithm~\citep{langford2008epoch} for contextual bandits, Algorithm \ref{alg:algoepgreedy} interleaves exploration and exploitation.  A policy that chooses actions uniformly at random is used both for exploration and as $\pi_{\bad}$ (in line with Assumption 2). Throughout this section, we use $\mu$ to denote the distribution of $(x,a) \sim d_0 \times \pi_{\bad}$, which is the data distribution of exploration data $\Dcal_i$ in Algorithm \ref{alg:algoepgreedy} at any time step $i$.

\begin{algorithm}[th]
\caption{\algoname}
\label{alg:algoepgreedy}
{\bfseries Input:} Exploration samples $\Dcal_0 = \{\}$, $t = 0$, scheduling parameters $\{n_i\}_{i = 1}^{\infty}$.
\begin{algorithmic}[1]
\For{$i = 1,2,\dotsc$}\label{algostep:step_i}
    \State Select an action uniformly at random and collect $\{(x_t, a_t, y_t)\}$.  
    \State $\Dcal_i = \Dcal_{i - 1} \cup \{(x_t, a_t, y_t)\}$.
    \State Compute $(\pi_i,\psi_i)$ by solving $$\displaystyle \argmax_{(\pi,\psi) \in \Pi\times\Psi} ~ \Eop_{(x,a,y) \sim \Dcal_i}\left[K \psi(y) \pi(a|x) \right] - \Eop_{(x,a,y) \sim \Dcal_i}\left[ \psi(y) \right].$$
    \label{step:findpolicy}
    \State Execute $\pi_i$ for $n_i$ steps (i.e., select $a_{t'} \sim \pi_i(\cdot|x_{t'})$ for $t' = t + 1, t + 2, \dotsc, t + n_i + 1$),  and set $t = t + n_i + 1$. 
\EndFor
\end{algorithmic}
\end{algorithm}

The key difference is the objective on line 4, which seeks the $\psi,\pi$ pair that most distinguishes from uniform random action values according to $\psi$ to form an unbiased estimate of the objective in  Eq.\eqref{eq:popobj}.

\paragraph{Choice of $\{n_i\}_{i = 1}^{\infty}$}
As we showed in Theorem \ref{thm:batchresult}, some amount of ``warm-up'' data is needed in order to guarantee the optimization is on the correct direction (i.e., $\Delta \psi > 0$ for the learned $\psi$). Thus, the exploitation scheduling $\{n_i\}_{i = 1}^{\infty}$ is chosen in the following way:
\begin{enumerate}[(1),topsep=1pt,parsep=1pt,itemsep=1pt,partopsep=1pt]
\item If the amount of exploration data at time step $i$ is not sufficient to guarantee $\Delta \psi_i > 0$, then explore.
\item If we have enough exploration data to ensure $\Delta \psi_i > 0$, then explore/exploit scheduling similar to the epoch-greedy algorithm~\citep{langford2008epoch} is used.
\end{enumerate}
In the analysis of \algoname, we provide the detailed definition of $\{n_i\}_{i = 1}^{\infty}$, along with a discussion about when we have sufficient exploration data to guarantee $\Delta \psi_i > 0$.

\subsection{Analysis of \algoname}

We now provide the analysis of Algorithm \ref{alg:algoepgreedy} in this section, as well as the definition and discussion of exploitation scheduling $\{n_i\}_{i = 1}^{\infty}$. Over our analysis, we use $\iota$ to denote the complexity term in a sample complexity bound for standard supervised learning, which can be naively formed as $\log\left( \frac{2T|\Psi||\Pi|}{\delta} \right)$ or using some more advanced method such as covering number or Rademacher complexity (see e.g., \citep{mohri2018foundations}).

We study the regret of Algorithm \ref{alg:algoepgreedy} which is defined as follows,
\begin{align}
\regret(T) \coloneqq T V(\pi^\star) - \E\left[\sum_{t = 1}^{T} R(x_t,a_t) \right].
\end{align}

The following theorem describes the regret guarantee of Algorithm \ref{alg:algoepgreedy}, along with the precise definition of $\{n_i\}_{i = 1}^{\infty}$.
\begin{theorem}
\label{thm:epoch_greedy}
If we choose $\{n_i\}_{i = 1}^{\infty}$ as
\begin{align}
n_i = \left\{
\begin{array}{lr}
0,     &  i \leq \nicefrac{2K^2}{\eta^2}; \\
\label{eq:epscheduling}
\lfloor \sqrt{\nicefrac{i}{K \iota}} \rfloor,     & i > \nicefrac{2K^2}{\eta^2},
\end{array}
\right.
\end{align}
then with probability at least $1 - \delta$, the regret of Algorithm \ref{alg:algoepgreedy} is bounded by 
\begin{align}
\regret(T) = \widetilde\Ocal \left(\frac{K^{\nicefrac{1}{3}}T^{\nicefrac{2}{3}} }{\Delta\psi_{\pi^\star}} + \frac{2K^2}{\eta^2} \right).
\end{align}
\end{theorem}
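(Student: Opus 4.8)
\emph{Proof proposal.} The plan is the classical explore--then--exploit (epoch-greedy) analysis, with the per-epoch exploitation regret supplied by the batch guarantee of Theorem~\ref{thm:batchresult}. A first useful observation is that the schedule $\{n_i\}$ does not depend on the data, so the number of epochs $E=E(T)$ completed within the first $T$ rounds is a deterministic function of $T$, and epoch $i$ consists of exactly one exploration round (acting by $\pi_{\bad}$) followed by $n_i$ exploitation rounds (acting by $\pi_i$). Hence the pseudo-regret splits as $\sum_{t=1}^{T}(V(\pi^\star)-V(\pi_{(t)})) \le \sum_{i=1}^{E}(V(\pi^\star)-V(\pi_{\bad})) + \sum_{i=1}^{E} n_i(V(\pi^\star)-V(\pi_i))$, where $\pi_{(t)}$ is the policy used at round $t$; the first sum is the exploration cost ($\le E$ since rewards lie in $[0,1]$) and the second is the exploitation cost. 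I would bound both on one high-probability event and then pass from pseudo-regret to $\regret(T)$ at the price of an additive $\delta T$, absorbed by taking $\delta$ polynomially small in $T$ (which is also the source of the $\log T$ sitting inside $\iota$).

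Second, I would identify line~4 with the batch estimator. Since exploration actions are i.i.d.\ uniform and contexts i.i.d.\ from $d_0$, the sample $\Dcal_i$ is an i.i.d.\ draw of size $i$ from $\mu = d_0\times\pi_{\bad}$ (the intervening exploitation rounds never enter $\Dcal$, so this i.i.d.\ structure is preserved), and with $d=\pi_{\bad}$ we have $d(a|x)=1/K$, so the objective on line~4 is literally $\Vhat_{\Dcal_i}(\pi,\psi)-\Vhat_{\Dcal_i}(\pi_{\bad},\psi)$. Thus $(\pi_i,\psi_i)$ is exactly the estimator $(\pihat,\psihat)$ of Theorem~\ref{thm:batchresult} with $n=i$. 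Invoking that theorem together with Lemma~\ref{lem:deltapsi}, and union-bounding over the (at most $T$) epochs, I obtain an event of probability $\ge 1-\delta$ on which, for every epoch $i$ with $\varepsilon_{\stat,i}\le\eta/2$, both $\Delta\psi_i>0$ and $V(\pi^\star)-V(\pi_i)\le 2\varepsilon_{\stat,i}/\Delta\psi^\star$. Specializing $\varepsilon_{\stat,i}$ to uniform $d$ makes it $\widetilde\Ocal(\sqrt{K/i})$; the condition $\varepsilon_{\stat,i}\le\eta/2$ is ensured once $i\gtrsim K^2/\eta^2$ (a crude range-based bound suffices here, and this is exactly the threshold the schedule uses), and those initial rounds cost at most $2K^2/\eta^2$ in total.

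Third, for the exploiting epochs ($i>2K^2/\eta^2$) I substitute the value-gap bound into the exploitation sum: $n_i(V(\pi^\star)-V(\pi_i)) \le \lfloor\sqrt{i/(K\iota)}\rfloor\cdot 2\varepsilon_{\stat,i}/\Delta\psi^\star = \widetilde\Ocal(1/\Delta\psi^\star)$, since the schedule is tuned so that $n_i\,\varepsilon_{\stat,i}=\widetilde\Ocal(1)$. Summing over the $\le E$ epochs gives exploitation cost $\widetilde\Ocal(E/\Delta\psi^\star)$. It remains to bound $E$: from $T = E+\sum_{i=1}^{E}n_i$ and $\sum_{i\le E}\lfloor\sqrt{i/(K\iota)}\rfloor = \Theta(E^{3/2}/\sqrt{K\iota})$ we get $E = \widetilde\Ocal(K^{1/3}T^{2/3})$ for $T$ large, while $E\le 2K^2/\eta^2$ in the small-$T$ regime. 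Combining: exploitation cost $\widetilde\Ocal(K^{1/3}T^{2/3}/\Delta\psi^\star)$, exploration cost $\le E = \widetilde\Ocal(K^{1/3}T^{2/3})$ (dominated since $\Delta\psi^\star\le 1$), plus the $\le 2K^2/\eta^2$ warm-up term, which is the stated bound (with $\Delta\psi^\star$ being the quantity the theorem denotes $\Delta\psi_{\pi^\star}$).

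The step I expect to be the main obstacle is getting the $K$-exponent right. The clean $K^{1/3}$ (rather than $K^{5/6}$) appears only if the per-epoch value gap is controlled through a \emph{variance-sensitive} concentration bound: the importance-weighted decoded reward $K\psi(y)\pi(a|x)$ has range $K$ but variance only $\Ocal(K)$ under uniform exploration, so a Bernstein-type bound gives $\varepsilon_{\stat,i}=\widetilde\Ocal(\sqrt{K/i})$, which then matches the $\sqrt{i/(K\iota)}$ schedule to yield $n_i\varepsilon_{\stat,i}=\widetilde\Ocal(1)$; a Hoeffding bound would in effect square the variance and spoil this. The secondary subtlety is pure bookkeeping: keeping a single high-probability event valid across all $\le T$ epochs (hence the $\log T$ in $\iota$), verifying that $\pi_i$ being a function of $\Dcal_i$ does not break the i.i.d.\ structure, and the pseudo-regret-to-expected-regret passage. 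The regret split, the arithmetic of the schedule sum, and the specialization of $\varepsilon_{\stat,n}$ to uniform $d$ are all routine.
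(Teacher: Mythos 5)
Your proposal is correct and follows the same skeleton as the paper's argument (warm-up until identifiability, then Langford--Zhang epoch-greedy accounting with one exploration round plus $n_i$ exploitation rounds per epoch, per-epoch cost from a batch-style guarantee, epoch count $E=\widetilde\Ocal(K^{1/3}T^{2/3})$), but it differs in how the per-epoch value gap is certified, and the difference matters. The paper's appendix goes through Lemmas~\ref{lem:decodebadpi}--\ref{lem:enterlowreg}: after $T_0=2K^2\iota/\eta^2$ exploration samples the empirical objective exceeds $V(\pi_{\bad})+K\varepsilon_{\Dcal}$, which forces $\Delta\psihat_{\Dcal}>0$ and yields the one-step regret $2K\varepsilon_{\Dcal}/\Delta\psi^\star$ with the range-based (Hoeffding) error $K\varepsilon_{\Dcal}=K\sqrt{\iota/(2|\Dcal|)}$, and then defers the schedule arithmetic to \citet{langford2008epoch}. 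You instead gate on $\varepsilon_{\stat,i}\le\eta/2$ and invoke Theorem~\ref{thm:batchresult}/Lemma~\ref{lem:deltapsi} with the Bernstein (variance-sensitive) error specialized to uniform exploration, $\varepsilon_{\stat,i}=\widetilde\Ocal(\sqrt{K/i})$. Your flagged ``main obstacle'' is exactly right: with the paper's $K\sqrt{\iota/(2i)}$ error and the schedule $n_i=\lfloor\sqrt{i/(K\iota)}\rfloor$, the per-epoch exploitation cost is $\widetilde\Ocal(\sqrt{K}/\Delta\psi^\star)$ rather than $\widetilde\Ocal(1/\Delta\psi^\star)$, and carrying the arithmetic out literally gives $\widetilde\Ocal(K^{5/6}T^{2/3}/\Delta\psi^\star)$; the stated $K^{1/3}$ dependence genuinely requires the variance-$\Ocal(K)$ concentration for the importance-weighted decoded reward under uniform exploration, as in Theorem~\ref{thm:batchresult}. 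So your route buys the claimed $K$-exponent explicitly where the paper leaves it implicit, while the paper's route buys a data-driven entry condition (Lemma~\ref{lem:pistarregret_agn}) that it also reuses for the practical stopping rule. Your remaining bookkeeping --- the union bound over at most $T$ epochs inside $\iota$, the observation that $\Dcal_i$ contains only exploration samples so the i.i.d.\ structure is preserved, charging at most $1$ per exploration round and $2K^2/\eta^2$ for the warm-up, and converting the high-probability pseudo-regret statement into the expectation-based $\regret(T)$ by taking $\delta$ polynomially small in $T$ --- matches, and in places is more careful than, the paper's treatment.
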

\begin{remark}
Although the length of initial pure exploration stage is defined using the constant $\eta$ (defined in Assumption \ref{asm:pistarstar}), there is also a data-driven way to determine it in practice, if we can upper bound the uniform policy's performance, $V(\pi_{\bad})$. That is, if $\Vhat_{\Dcal_i}(\pi_i,\psi_i) - \Vhat_{\Dcal_i}(\pi_{\bad},\psi_i) > V(\pi_{\bad}) + K \varepsilon_{\Dcal_i}$ holds at time step $i$, where $K \varepsilon_{\Dcal_i}$ denotes the statistical error at time step $i$, then we must have $\Delta \psi_i > 0$. Therefore, we can use the scheduling of $n_i = \lfloor \sqrt{\nicefrac{i}{K \iota}} \rfloor$ for all the subsequent times steps. 
The detailed theoretical basis for this is presented in Lemma \ref{lem:pistarregret_agn}.
\end{remark}

We defer the detailed proof Theorem \ref{thm:epoch_greedy} to Appendix \ref{sec:proof}. The following proof sketch describes the main technical components for proving Theorem \ref{thm:epoch_greedy}.
\begin{proof}[\bf\em Proof Sketch]
We use $\Vhat_\Dcal(\pi,\psi)$ and $\Vhat_\Dcal(\pi_{\bad},\psi)$ to denote the empirical estimations of $V(\pi,\psi)$ and $V(\pi_{\bad},\psi)$ in Algorithm \ref{alg:algoepgreedy} respectively, where
\begin{align}
\Vhat_\Dcal(\pi,\psi) \coloneqq &~ \Eop_{(x,a,y) \sim \Dcal}\left[ K \psi(y) \pi(a|s) \right]
\\
\Vhat_\Dcal(\pi_{\bad},\psi) \coloneqq &~ \Eop_{(x,a,y) \sim \Dcal}\left[ \psi(y) \right].
\end{align}
Then step \ref{step:findpolicy} in Algorithm \ref{alg:algoepgreedy} can we rewritten as
\begin{align}
\label{eq:optobj}
\argmax_{\pi \in \Pi} \max_{\psi \in \Psi} \qquad \Vhat_\Dcal(\pi,\psi) - \Vhat_\Dcal(\pi_{\bad},\psi_\pi),
\end{align}
where $\Dcal$ denotes $\Dcal_i$ for specific time step $i$.

For simplicity, we also define $\psihat_{\Dcal}$ and $\pihat_\Dcal$ to be the learned policy and reward decoder given exploration data $\Dcal$ (for specific time step $i$, also just set $\Dcal = \Dcal_i$),
\begin{align}
\label{eq:defpihatpsihat}
(\pihat_\Dcal, \psihat_{\Dcal}) \coloneqq &~ \argmax_{(\pi,\psi) \in \pi \times \Psi} \Vhat_\Dcal(\pi,\psi) - \Vhat_\Dcal(\pi_{\bad},\psi_\pi).
\end{align}

Over this section, we define $\varepsilon_\Dcal$ as $\varepsilon_{\Dcal} \coloneqq \sqrt{\nicefrac{\iota}{2|\Dcal|}}$, and we have $|(\Vhat_\Dcal(\pi,\psi) - \Vhat_\Dcal(\pi_{\bad},\psi)) - (V(\pi,\psi) - V(\pi_{\bad},\psi))| \leq K \varepsilon_{\Dcal}$ for any $(\pi,\psi) \in \Pi \times \Psi$ by simply applying the standard concentration inequality.

The difficulty of identifiability discussed for the batch mode (see Remark \ref{rmk:batchidfdiff}), implies that Algorithm \ref{alg:algoepgreedy} should not start exploitation until it gathers enough exploration data so that better than random performance can be guaranteed after learning from the exploration data. We formalize this fact with the following lemmas.

\begin{lemma}
\label{lem:decodebadpi}
Let $\Dcal$ be the exploration data at arbitrary time step. If $\pi$ satisfies $V(\pi) \leq V(\pi_{\bad})$, then, with probability at least $1 - \delta$, we have for any $\psi \in \Psi$,
\begin{align}
\label{eq:badpivtilde}
\Vhat_\Dcal(\pi,\psi) - \Vhat_\Dcal(\pi_{\bad},\psi) \leq V(\pi_{\bad}) + K \varepsilon_{\Dcal}.
\end{align}
\end{lemma}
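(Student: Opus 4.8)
The plan is to reduce the claim to the population-level identity~\eqref{eq:anapopobj} plus the uniform concentration bound already recorded in the excerpt. First I would observe that the exploration data $\Dcal$ is drawn i.i.d.\ from $\mu = d_0 \times \pi_{\bad}$ with $\pi_{\bad}(a|x) = 1/K$, so that $K\,\pi(a|x) = \pi(a|x)/\pi_{\bad}(a|x)$ is exactly the importance weight of $\pi$ against the sampling policy. Hence $\E[\Vhat_\Dcal(\pi,\psi)] = \E_{(x,a)\sim d_0\times\pi}[\psi(y)] = V(\pi,\psi)$ and $\E[\Vhat_\Dcal(\pi_{\bad},\psi)] = V(\pi_{\bad},\psi)$; i.e.\ $\Vhat_\Dcal(\pi,\psi) - \Vhat_\Dcal(\pi_{\bad},\psi)$ is an unbiased estimator of $V(\pi,\psi) - V(\pi_{\bad},\psi)$. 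Then I would invoke the uniform deviation bound stated just above the lemma, namely that with probability at least $1-\delta$,
\[
\bigl|(\Vhat_\Dcal(\pi,\psi) - \Vhat_\Dcal(\pi_{\bad},\psi)) - (V(\pi,\psi) - V(\pi_{\bad},\psi))\bigr| \le K\varepsilon_\Dcal \quad \text{for all } (\pi,\psi)\in\Pi\times\Psi,
\]
so that in particular $\Vhat_\Dcal(\pi,\psi) - \Vhat_\Dcal(\pi_{\bad},\psi) \le V(\pi,\psi) - V(\pi_{\bad},\psi) + K\varepsilon_\Dcal$.

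Next I would control the population quantity $V(\pi,\psi) - V(\pi_{\bad},\psi)$. Under Assumption~\ref{asm:condind}, the derivation leading to~\eqref{eq:anapopobj} gives the exact factorization $V(\pi,\psi) - V(\pi_{\bad},\psi) = (V(\pi) - V(\pi_{\bad}))\,\Delta\psi$. Using the hypothesis $V(\pi) \le V(\pi_{\bad})$, I would split on the sign of $\Delta\psi$: if $\Delta\psi \ge 0$ then $(V(\pi) - V(\pi_{\bad}))\Delta\psi \le 0 \le V(\pi_{\bad})$; if $\Delta\psi < 0$ then $(V(\pi) - V(\pi_{\bad}))\Delta\psi = (V(\pi_{\bad}) - V(\pi))\,|\Delta\psi| \le V(\pi_{\bad})\cdot 1$, where I use $V(\pi) \ge 0$ and the bound $|\Delta\psi| \le 1$, which holds because $\psi$ takes values in $[0,1]$ so both conditional expectations $\E[\psi(y)|r=0],\E[\psi(y)|r=1]$ lie in $[0,1]$. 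Either way, $V(\pi,\psi) - V(\pi_{\bad},\psi) \le V(\pi_{\bad})$.

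Finally I would chain the two displays: on the $1-\delta$ event, $\Vhat_\Dcal(\pi,\psi) - \Vhat_\Dcal(\pi_{\bad},\psi) \le V(\pi,\psi) - V(\pi_{\bad},\psi) + K\varepsilon_\Dcal \le V(\pi_{\bad}) + K\varepsilon_\Dcal$, uniformly over $\psi\in\Psi$, which is exactly~\eqref{eq:badpivtilde}. There is no serious obstacle here; the only points requiring a little care are making explicit that the exploration data distribution is $d_0\times\pi_{\bad}$ (so the estimator is unbiased and the one-sided concentration bound applies simultaneously over $\Pi\times\Psi$), and the elementary sign analysis of $(V(\pi)-V(\pi_{\bad}))\Delta\psi$ together with the a priori bounds $0 \le V(\pi)$ and $|\Delta\psi|\le 1$. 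I would also note that the bound holds for \emph{every} $\psi\in\Psi$ precisely because the concentration inequality is uniform over $\Psi$ and the final population bound $V(\pi_{\bad})$ does not depend on $\psi$.
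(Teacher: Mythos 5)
Your proposal is correct and follows essentially the same route as the paper's proof: apply the uniform concentration bound to replace the empirical difference by $V(\pi,\psi)-V(\pi_{\bad},\psi)$ up to $K\varepsilon_\Dcal$, factor the population quantity as $(V(\pi)-V(\pi_{\bad}))\Delta\psi$ via Assumption~\ref{asm:condind} and \eqref{eq:anapopobj}, and bound it by $V(\pi_{\bad})$ using $V(\pi)\ge 0$ and $|\Delta\psi|\le 1$. Your explicit sign split on $\Delta\psi$ just spells out the elementary step the paper delegates to the argument of Remark~\ref{rmk:diff2}.
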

In Lemma \ref{lem:decodebadpi}, we upper bound the value of the objective function if we enter the ``opposite'' optimization direction --- $\psi \to \argmin_{\psi \in \Psi} \Delta \psi < 0$. The proof of Lemma \ref{lem:decodebadpi} follows a similar argument as Remark \ref{rmk:diff2} and is deferred to Appendix \ref{sec:proof}. By using that result, the next lemma shows that if the amount of exploration data is large enough for $\Vhat_\Dcal(\pihat_{\Dcal},\psihat_{\Dcal}) - \Vhat_\Dcal(\pi_{\bad},\psihat_{\Dcal}) > V(\pi_{\bad}) + K \varepsilon_{\Dcal}$, then we must have $\Delta \psihat_{\Dcal} > 0$, which yields the bound on $V(\pihat_{\Dcal})$.
\begin{lemma}
\label{lem:pistarregret_agn}
Let $\Dcal$ be the exploration data at some time step with $(\pihat_\Dcal, \psihat_\Dcal)$ defined as in Eq.\eqref{eq:defpihatpsihat}. Then, if
\begin{align}
\label{eq:pitildegvb_a}
\Vhat_\Dcal(\pihat_{\Dcal},\psihat_{\Dcal}) - \Vhat_\Dcal(\pi_{\bad},\psihat_{\Dcal}) > V(\pi_{\bad}) + K \varepsilon_{\Dcal},
\end{align}
then we have,
\begin{align}
V(\pihat_{\Dcal}) \geq V(\pi^\star) - \frac{2 K \varepsilon_\Dcal}{\Delta \psi^\star}.
\end{align}
\end{lemma}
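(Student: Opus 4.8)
The plan is to first establish a dichotomy that pins down the sign of $V(\pihat_\Dcal)-V(\pi_{\bad})$, and then read off the value bound from the empirical optimality of $(\pihat_\Dcal,\psihat_\Dcal)$. First I would show that hypothesis~\eqref{eq:pitildegvb_a} forces $V(\pihat_\Dcal) > V(\pi_{\bad})$: if instead $V(\pihat_\Dcal)\le V(\pi_{\bad})$, then Lemma~\ref{lem:decodebadpi} applied with $\pi=\pihat_\Dcal$ and $\psi=\psihat_\Dcal$ gives $\Vhat_\Dcal(\pihat_\Dcal,\psihat_\Dcal)-\Vhat_\Dcal(\pi_{\bad},\psihat_\Dcal)\le V(\pi_{\bad})+K\varepsilon_\Dcal$, directly contradicting~\eqref{eq:pitildegvb_a}. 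With $V(\pihat_\Dcal)-V(\pi_{\bad})>0$ in hand, I would combine~\eqref{eq:pitildegvb_a} with the concentration estimate $\bigl|\,\Vhat_\Dcal(\pi,\psi)-\Vhat_\Dcal(\pi_{\bad},\psi)-(V(\pi,\psi)-V(\pi_{\bad},\psi))\,\bigr|\le K\varepsilon_\Dcal$ and identity~\eqref{eq:anapopobj} to get $(V(\pihat_\Dcal)-V(\pi_{\bad}))\Delta\psihat_\Dcal > V(\pi_{\bad})\ge 0$, hence $\Delta\psihat_\Dcal>0$; i.e., the learned decoder is oriented correctly.

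Next I would exploit that $(\pihat_\Dcal,\psihat_\Dcal)$ maximizes $\Vhat_\Dcal(\pi,\psi)-\Vhat_\Dcal(\pi_{\bad},\psi)$ over $\Pi\times\Psi$, so its objective value is at least that of $(\pi^\star,\psi^\star)$. Applying the two-sided concentration bound to both sides and substituting~\eqref{eq:anapopobj} twice turns this into $(V(\pihat_\Dcal)-V(\pi_{\bad}))\Delta\psihat_\Dcal \ge (V(\pi^\star)-V(\pi_{\bad}))\Delta\psi^\star - 2K\varepsilon_\Dcal$. Since $\psi^\star$ maximizes $\Delta\psi$ over $\Psi$ we have $\Delta\psihat_\Dcal \le \Delta\psi^\star$, and since $V(\pihat_\Dcal)-V(\pi_{\bad})\ge 0$ from the first step, the left-hand side is at most $(V(\pihat_\Dcal)-V(\pi_{\bad}))\Delta\psi^\star$. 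Dividing through by $\Delta\psi^\star$, which is strictly positive because Assumption~\ref{asm:pistarstar} gives $(V(\pi^\star)-V(\pi_{\bad}))\Delta\psi^\star \ge V(\pi_{\bad})+\eta>0$, yields $V(\pihat_\Dcal)-V(\pi_{\bad}) \ge V(\pi^\star)-V(\pi_{\bad}) - 2K\varepsilon_\Dcal/\Delta\psi^\star$, which rearranges to the claimed inequality.

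The repeated $\pm K\varepsilon_\Dcal$ bookkeeping and the substitution of~\eqref{eq:anapopobj} are routine. The step that requires care is the sign handling in the second paragraph: replacing $\Delta\psihat_\Dcal$ by $\Delta\psi^\star$ inside the product $(V(\pihat_\Dcal)-V(\pi_{\bad}))\Delta\psihat_\Dcal$ only preserves the inequality because $V(\pihat_\Dcal)-V(\pi_{\bad})\ge 0$, which is precisely why the first paragraph's exclusion of $V(\pihat_\Dcal)\le V(\pi_{\bad})$ via Lemma~\ref{lem:decodebadpi} is a genuine prerequisite, and why one must separately verify $\Delta\psi^\star>0$ before dividing by it.
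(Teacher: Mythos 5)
Your proof is correct and follows essentially the same route as the paper's: empirical optimality of $(\pihat_\Dcal,\psihat_\Dcal)$ against $(\pi^\star,\psi^\star)$, the uniform $K\varepsilon_\Dcal$ concentration bound, the identity \eqref{eq:anapopobj}, and hypothesis \eqref{eq:pitildegvb_a} to establish $\Delta\psihat_\Dcal > 0$ before trading $\Delta\psihat_\Dcal$ for $\Delta\psi^\star$. The only cosmetic differences are that you get the sign via the contrapositive of Lemma~\ref{lem:decodebadpi} (the paper argues directly from $\Delta\psihat_\Dcal \geq -1$ and $V(\pihat_\Dcal) - V(\pi_{\bad}) \geq -V(\pi_{\bad})$), and that you replace $\Delta\psihat_\Dcal$ by $\Delta\psi^\star$ using $V(\pihat_\Dcal) - V(\pi_{\bad}) \geq 0$ \emph{before} dividing, rather than enlarging the denominator afterwards as the paper does, which is if anything slightly cleaner.
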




The next lemma provides a bound on the amount of initial exploration data that we need to ensure Eq.\eqref{eq:pitildegvb_a}.
\begin{lemma}
\label{lem:enterlowreg}
Let $T_0 = {2K^2 \iota}/{\eta^2}$ where $\eta$ is defined in Assumption \ref{asm:pistarstar}, then with probability at least $1 - \delta$, we have
\begin{align}
\Vhat_\Dcal(\pihat_{\Dcal},\psihat_{\Dcal}) - \Vhat_\Dcal(\pi_{\bad},\psihat_{\Dcal}) > V(\pi_{\bad}) + K \varepsilon_{\Dcal},
\end{align}
where $\Dcal = \Dcal_{t > T_0}$.
\end{lemma}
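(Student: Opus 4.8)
The plan is to combine the optimality of $(\pihat_\Dcal,\psihat_\Dcal)$ on line~\ref{step:findpolicy} with the identifiability Assumption~\ref{asm:pistarstar} and a single uniform concentration bound. First I would invoke the uniform deviation inequality already recorded in the proof sketch: with probability at least $1-\delta$, for every $(\pi,\psi)\in\Pi\times\Psi$,
$$\bigl|(\Vhat_\Dcal(\pi,\psi)-\Vhat_\Dcal(\pi_{\bad},\psi))-(V(\pi,\psi)-V(\pi_{\bad},\psi))\bigr|\le K\varepsilon_\Dcal,\qquad \varepsilon_\Dcal=\sqrt{\iota/(2|\Dcal|)},$$
where $\iota$ absorbs the union bound over $\Pi\times\Psi$ (and over the epochs), and where unbiasedness of $\Vhat_\Dcal$ is the importance-weighting identity $\E_\mu[K\psi(y)\pi(a|x)]=V(\pi,\psi)$, valid because exploration draws actions uniformly; the factor $K$ in $K\varepsilon_\Dcal$ is the range $[0,K]$ of the integrand. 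Since $\Dcal=\Dcal_{t>T_0}$ satisfies $|\Dcal|\ge T_0$, this is the version of the bound I would use.

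Second, since $(\pihat_\Dcal,\psihat_\Dcal)$ maximizes $\Vhat_\Dcal(\pi,\psi)-\Vhat_\Dcal(\pi_{\bad},\psi)$ over $\Pi\times\Psi$, plugging in $(\pi^\star,\psi^\star)$ and then applying the concentration bound gives
$$\Vhat_\Dcal(\pihat_\Dcal,\psihat_\Dcal)-\Vhat_\Dcal(\pi_{\bad},\psihat_\Dcal)\ge \Vhat_\Dcal(\pi^\star,\psi^\star)-\Vhat_\Dcal(\pi_{\bad},\psi^\star)\ge \bigl(V(\pi^\star,\psi^\star)-V(\pi_{\bad},\psi^\star)\bigr)-K\varepsilon_\Dcal.$$
Using the identity derived in Eq.~\eqref{eq:anapopobj}, $V(\pi^\star,\psi^\star)-V(\pi_{\bad},\psi^\star)=(V(\pi^\star)-V(\pi_{\bad}))\Delta\psi^\star$, and then Assumption~\ref{asm:pistarstar}, which gives $(V(\pi^\star)-V(\pi_{\bad}))\Delta\psi^\star\ge V(\pi_{\bad})+\eta$, we obtain $\Vhat_\Dcal(\pihat_\Dcal,\psihat_\Dcal)-\Vhat_\Dcal(\pi_{\bad},\psihat_\Dcal)\ge V(\pi_{\bad})+\eta-K\varepsilon_\Dcal$. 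Finally, for $|\Dcal|>T_0=2K^2\iota/\eta^2$ one has $K\varepsilon_\Dcal=K\sqrt{\iota/(2|\Dcal|)}<\eta/2$, so $\eta-K\varepsilon_\Dcal>\eta/2>K\varepsilon_\Dcal$, which yields exactly $\Vhat_\Dcal(\pihat_\Dcal,\psihat_\Dcal)-\Vhat_\Dcal(\pi_{\bad},\psihat_\Dcal)>V(\pi_{\bad})+K\varepsilon_\Dcal$, the claimed inequality.

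The argument is short once these pieces are assembled, so the points that need care are bookkeeping rather than genuine obstacles: getting the range $[0,K]$ of the importance-weighted estimator right (this is what forces the $K^2$ factor in $T_0$ and keeps it consistent with the length of the initial pure-exploration stage used in Theorem~\ref{thm:epoch_greedy}), and the boundary case — at exactly $|\Dcal|=T_0$ the last step only gives a non-strict inequality, so the statement is used with the mild strictness $|\Dcal|>T_0$, equivalently with $T_0$ read as an upper bound on the number of pure-exploration rounds. I would also make explicit that the concentration event and the optimality comparison are with respect to the \emph{same} dataset $\Dcal$, so that the $1-\delta$ failure probability is not compounded.
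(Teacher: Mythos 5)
Your proposal is correct and follows essentially the same route as the paper's proof: optimality of $(\pihat_\Dcal,\psihat_\Dcal)$, the uniform concentration bound with deviation $K\varepsilon_\Dcal$, the identity of Eq.~\eqref{eq:anapopobj} together with Assumption~\ref{asm:pistarstar}, and the threshold $|\Dcal| > T_0$ forcing $K\varepsilon_\Dcal < \eta/2$ — the paper simply runs the same chain of inequalities in the opposite order (from the assumption down to the empirical quantity). Your handling of the strict-versus-non-strict boundary at $|\Dcal| = T_0$ is a minor tidiness improvement over the paper's write-up, not a different argument.
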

Combining these lemmas together with a similar argument as in \citep{langford2008epoch}, we establish a proof for Theorem \ref{thm:epoch_greedy}. The detailed proof of all the lemmas above and Theorem \ref{thm:epoch_greedy} can be found in Appendix \ref{sec:proof}.
\end{proof}

\section{Experiments}
\label{sec:experiments}

In this section, we provide empirical evaluations in simulated environments. We experiment with both batch and online IGL.



The task is as depicted in Figure~\ref{fig:diff_setting}. We evaluated our approach by comparing
\begin{enumerate}[(1),topsep=1pt,parsep=1pt,itemsep=1pt,partopsep=1pt]
    \item {\sf SUP} --- Supervised classification, as in Figure~\ref{fig:supln};
    \item {\sf CB} --- Contextual bandits with exact reward, as in Figure~\ref{fig:cbs};
    \item {\sf IGL} --- Our proposed approach using Eq.\eqref{eq:popobj} for batch mode learning and Algorithm \ref{alg:algoepgreedy} for the online setting, with an image feedback vector as in Figure~\ref{fig:bci}.
\end{enumerate}
Note that supervised learning ({\sf SUP}) should be better than contextual bandit learning ({\sf CB}), which in turn should  do better than Interaction-Grounded Learning ({\sf IGL}) since each step in that sequence makes the problem more difficult.

Supervised classification uses logistic regression with a linear representation and cross-entropy loss. The other methods use the same representation with softmax policies. During testing time, each algorithm takes the argmax of the policy. We provide the details on setting up the experiments in Appendix \ref{sec:expdetail}, where we also discuss the practical difficulty of jointly optimizing $\pi$ and $\psi$ created by the multiple extrema of equation~\eqref{eq:popobj} and propose some mitigation strategies.
%



\subsection*{Experimental Results}
We evaluated our approach on the MNIST environment based on the infinite MNIST simulator \citep{loosli2007training}.
At each time step, the context $x_t$ is generated uniformly from the infinite MNIST simulator. After that, the learner selects action $a_t \in \{0, ... , 9\}$ as the predicted label of $x_t$. The binary reward $r$ is the correctness of the prediction label $a_t$. The feedback vector $y_t$ is also generated from the infinite MNIST simulator, either an image of a one digit or an image of a zero digit depending upon $r$.

Over our experiments in the batch mode, we use the uniform policy $\pi_{\bad}$ to gather data, and the number of examples is $60000$. Our results are averaged over 16 random trials.

\begin{table}[ht]
\centering
\begin{tabular}{cc}
\hline
\textbf{Setting}  & \textbf{Policy Accuracy (\%)} \\ \hline
\textbf{{\sf SUP}}      & 90.62 $\pm$ 1.02            \\
\textbf{{\sf CB}}          & 85.58 $\pm$ 4.50           \\ 
\textbf{{\sf IGL}}       & 82.21 $\pm$ 4.33            \\ \hline
\end{tabular}
\caption{Results in Batch Mode of MNIST Environment.}
\label{tb:mnist}
\end{table}

Table \ref{tb:mnist} is the result in the batch mode of the MNIST environment. All the experiments are repeated 16 times with the average and standard deviation reported.
For our {\sf IGL} algorithm, we optimize the objective in line 4 of \algoname on the batch data.
{\sf IGL} achieves comparable accuracy as {\sf CB} despite the handicap of only observing feedback vectors. 
%



\begin{figure}[thb]
    \centering
        \includegraphics[width=0.9\linewidth]{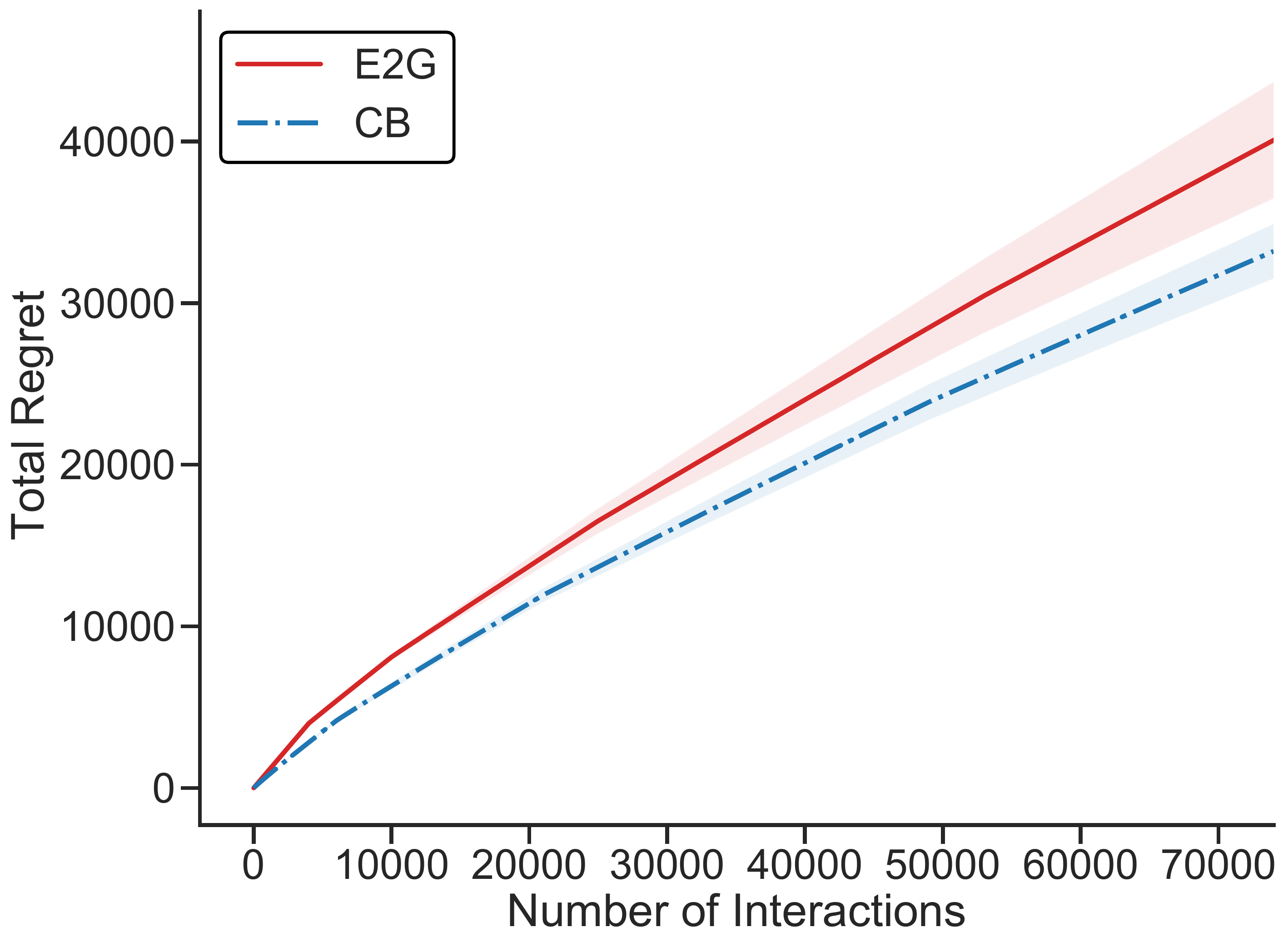}
    \caption{Comparison of {\algoname} and competitors on the MNIST environment.}
    \label{fig:ep_greedy_mnist}
\end{figure}

We also compare {\algoname} with a {\sf CB} algorithm in the online setting with the results shown in Figure \ref{fig:ep_greedy_mnist} (averaged over 16 runs). In this case, {\algoname} starts with 4000 exploration events based on the suggestion of Theorem \ref{thm:epoch_greedy}. This result demonstrates the effectiveness of the online use of {\algoname}.


\section{Failure of Unsupervised Learning in IGL}
\label{sec:failunsup}

Unsupervised learning provides another approach to Interaction-Grounded Learning. If unsupervised learning distinguishes the feedback vectors generated from different rewards, then learning the optimal policy for IGL is still possible.  Indeed, the task from section~\ref{sec:experiments}  can be solved by clustering the feedback vectors.  However, in this section, we show the information-theoretical hardness of using unsupervised learning in IGL in general.

To formalize our argument, suppose the agent picks an unsupervised-learning oracle and a contextual-bandits oracle. The reward is decoded from feedback vectors using the unsupervised-learning oracle, and the contextual-bandits oracle learns a policy using the decoded reward. 
Our result focuses on the ambiguity of unsupervised-learning-based approaches. To make it cleaner, we consider the scenario of (1) data is infinite, (2) the contextual-bandits oracle could always output the exact optimal policy. Note that, this result could be extended to the general case by capturing the statistical error and approximation error properly.
The following theorem formally present our lower bound result.

\begin{theorem}
\label{thm:lowerbound}
There exists a set of IGL tasks and a bad behavior policy $\pi_\bad$, such that: (i) each IGL task and $\pi_\bad$ satisfy Assumption \ref{asm:condind} and Assumption \ref{asm:pistarstar}, the data for each IGL task is infinite, and data is gathered by running $\pi_\bad$ (ii) the contextual bandit oracle outputs the exact optimal policy corresponding to the input reward, then for any unsupervised learning oracle called by the unsupervised-learning-based approach, there exists at least one IGL task in that class, such that the performance loss of output policy, $\pihat$, is $V(\pi^\star) - V(\pihat) = \Omega(1)$.
\end{theorem}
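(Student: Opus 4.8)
The plan is to construct a small family of IGL instances that share the same context distribution, action set, and \emph{same marginal distribution over feedback vectors} $y$, but differ in which latent reward each feedback vector corresponds to. Concretely, I would take $\Xcal$ to be (say) two contexts, $K = 2$ or slightly larger actions (large enough that Assumption~\ref{asm:pistarstar} can be met — recall the remark that $K=2$ is too easy, so I would pick $K$ large enough, e.g.\ $K=3$, and tune the reward magnitudes so $\pi_\bad$ = uniform is "sufficiently bad"), and the feedback space $\Ycal$ a finite set partitioned into two blocks $B_0, B_1$. In instance $A$, $r=1$ emits a feedback vector uniformly from $B_1$ and $r=0$ from $B_0$; in instance $B$, I swap the roles of two particular sub-blocks so that the \emph{correct} decoder now partitions $\Ycal$ along a different clustering — but crucially the unconditional distribution of $y$ under $\pi_\bad$ is identical in $A$ and $B$. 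Since the unsupervised oracle sees only i.i.d.\ draws of $y$ (its input is the feedback marginal, which is the same in both instances), it must return the same clustering/decoder $\psihat$ on both. I would then argue: whatever decoder $\psihat$ the oracle outputs, it induces a decoded reward $\psihat(y)$; on at least one of the two instances this decoded reward is anti-correlated (or uncorrelated) with the true $r$, i.e.\ $\Delta\psihat \le 0$ there, so feeding it to the (exact) contextual-bandit oracle yields a policy maximizing the \emph{wrong} objective, which by the identifiability gap in Assumption~\ref{asm:pistarstar} is the minimizer $\pi^\dagger$ of $V$, incurring $V(\pi^\star) - V(\pihat) = \Omega(1)$.

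The key steps, in order, are: (1) write down the two (or more) instances explicitly, specifying $d_0$, the reward functions $R_A, R_B$, and the conditional feedback kernels $P_A(y\mid r), P_B(y\mid r)$, and verify Assumption~\ref{asm:condind} holds for each (immediate by construction, since $y$ depends on $(x,a)$ only through $r$). (2) Verify Assumption~\ref{asm:pistarstar} for each instance with a common constant $\eta>0$: this is where I tune $K$ and the gap between $V(\pi^\star)$ and $V(\pi_\bad)$; I need $(V(\pi^\star)-V(\pi_\bad))\Delta\psi^\star \ge V(\pi_\bad)+\eta$, so I want $\pi^\star$ much better than uniform and $\Delta\psi^\star$ close to $1$, both achievable by making $B_0, B_1$ perfectly separated in each instance. (3) Check the feedback marginal $\int P(y\mid r)\,\mathrm{d}\mu(r)$ agrees across instances — this is the heart of the construction and dictates the swap must be "reward-mass-preserving". (4) Observe the unsupervised oracle's output is a deterministic (or distributionally fixed) function of this common marginal, hence identical across instances; call it $\psihat$. (5) Case-analyze $\psihat$: on instance $A$ either $\Delta_A\psihat > 0$ or $\le 0$; by the marginal-matching and the swap, if $\Delta_A\psihat>0$ then $\Delta_B\psihat\le 0$ (the swap flips the sign of the correlation with the true reward on $B$), and vice versa. (6) On the bad instance, the contextual-bandit oracle — which by assumption returns the exact maximizer of $V(\cdot,\psihat) - V(\pi_\bad,\psihat) = (V(\cdot)-V(\pi_\bad))\Delta\psihat$ using Eq.~\eqref{eq:anapopobj} — with $\Delta\psihat \le 0$ returns $\pi^\dagger = \argmin_\pi V(\pi)$, giving $V(\pi^\star)-V(\pihat) \ge V(\pi^\star) - \min_\pi V(\pi) = \Omega(1)$ by the construction in step (2).

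The main obstacle I expect is step (3) combined with step (5): engineering the swap so that the feedback marginal is \emph{exactly} invariant while the sign of $\Delta\psihat$ is \emph{forced} to flip for \emph{every} possible decoder the oracle could output, not just for "natural" clusterings. A clean way to handle this is to use a symmetry: build instance $B$ as the image of instance $A$ under an involution $\sigma$ on $\Ycal$ that preserves the $\mu$-marginal of $y$ but exchanges enough of $B_0$ with $B_1$ that $\mathbb{E}_A[\psi(y)\mid r{=}1]-\mathbb{E}_A[\psi(y)\mid r{=}0] = -(\mathbb{E}_B[\psi(\sigma y)\mid r{=}1]-\mathbb{E}_B[\psi(\sigma y)\mid r{=}0])$ for the oracle's output; since the oracle cannot distinguish $A$ from $B$, it cannot adapt $\psi$ to $\sigma$, so one of the two signs is non-positive. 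A secondary nuisance is making sure $\pi^\star$ and $\pi^\dagger$ genuinely differ in value by a constant on both instances simultaneously while keeping the contexts/actions shared; a two-context, three-action deterministic-reward design with a "trap" action handles this. I would also double-check the edge case where $\Delta\psihat = 0$ exactly (decoder is useless): then $V(\cdot,\psihat)-V(\pi_\bad,\psihat)\equiv 0$, the CB oracle's argmax is arbitrary, and I can take the adversarial tie-break to again give $\pi^\dagger$ — or perturb the construction slightly to avoid exact ties.
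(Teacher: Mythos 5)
Your overall strategy is the same as the paper's: build a family of IGL tasks whose feedback marginals under the uniform logging policy $\pi_{\bad}$ are identical, conclude the unsupervised oracle must output one common decoder $\psihat$ for the whole family, and then show that decoder hands the exact CB oracle an objective that is anti-correlated with the true reward on at least one instance. However, there is a genuine gap at the step you yourself flagged as the main obstacle (your steps (3)+(5)), and the symmetry argument you propose does not close it. The identity you invoke relates $\Delta_A\psi$ to $\Delta_B(\psi\circ\sigma)$, i.e., to a \emph{different} decoder on instance $B$; the oracle outputs the \emph{same} $\psi$ on both instances, so nothing forces $\Delta_A\psihat>0 \Rightarrow \Delta_B\psihat\le 0$. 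Concretely, for any partial swap (instance $B$ obtained from $A$ by exchanging a sub-block of $B_1$ with a marginal-mass-matched sub-block of $B_0$), the hedging decoder $\psi = \1\bigl(y \in B_1\cup\sigma(B_1)\bigr)$ has $\Delta\psi>0$ in \emph{both} instances, so the exact CB oracle returns the true optimal policy in both and your lower bound fails. The only way a swap of the whole reward-1 block avoids this hedge is a full exchange of $B_1$ with $B_0$, but marginal invariance then forces $V(\pi_{\bad})=1/2$, and Assumption \ref{asm:pistarstar} becomes unsatisfiable since $(V(\pi^\star)-V(\pi_{\bad}))\Delta\psi^\star \le 1/2 = V(\pi_{\bad})$. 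So as written, no instantiation of your two-instance swap simultaneously (a) matches marginals, (b) satisfies Assumption \ref{asm:pistarstar}, and (c) defeats every decoder.

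The paper closes this hole by using a larger family rather than a binary swap: ten MNIST-style environments, where in environment $i$ the feedback under $r=1$ is an image of digit $i$ and under $r=0$ is an image with label drawn uniformly from the other nine digits. Under uniform $\pi_{\bad}$ (reward rate $1/10$) the feedback-label marginal is uniform over all ten digits in every environment, so the oracle's decoder is common; writing $r_i = \E[\psi(y)\mid \text{label } i]$, a pigeonhole argument on $\argmin_i r_i$ shows that in that environment $\E[\psi(y)\mid r=1] \le \E[\psi(y)\mid r=0]$, so the exact CB oracle returns a policy of value $0$ there, giving the $\Omega(1)$ loss, and Assumption \ref{asm:pistarstar} holds comfortably ($V(\pi_{\bad})=0.1$, $\Delta\psi^\star=1$, $V(\pi^\star)=1$). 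The crucial structural feature is that the reward-0 feedback is spread uniformly over the \emph{complement} of the reward-1 cluster, which is exactly what defeats union/hedging decoders and what a two-block swap lacks. (A two-instance version can in fact be salvaged, but it requires the signed measures $P(\cdot\mid r=1)-P(\cdot\mid r=0)$ of the two instances to be negatively proportional, with different uniform-policy reward rates in the two tasks so that Assumption \ref{asm:pistarstar} can still hold; that is a substantially more delicate construction than the marginal-preserving block swap you describe. Also note the paper's exact-tie case, $\Delta\psihat=0$, is glossed over there too, so your remark about adversarial tie-breaking or perturbation is a reasonable footnote, not the issue.)
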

\begin{proof}[\bf\em Proof of Theorem \ref{thm:lowerbound}]
We construct the following 10 environments based on the MNIST environment introduced in Section \ref{sec:experiments}. Each of theses 10 environments have the same context-action-reward setting as described in Section \ref{sec:experiments} while differing in the feedback vector generation process. The environment $i$ ($i = 0,1,\dotsc,9$) generates the feedback vector in the following way:
\begin{align}
y|_{r = 0} = &~ \text{a random image with label  \{``0''$,\dotsc,$``9''\}$\setminus$``$i$''},
\\
&~ \text{where the labels are also distributed uniformly},
\\
y|_{r = 1} = &~ \text{a random image with label ``$i$''}.
\end{align}

If the data gathering policy, $\pi_\bad$, is the uniform policy over 10 actions, then the distribution of $y$ is \emph{identical} for all 10 environments above. It implies that any unsupervised-learning-based approach is not able to distinguish among these 10 environments no matter which unsupervised-learning oracle it calls, and therefore it would decode the same reward for all these 10 environments.

We use $\psi(\cdot)$ to denote the learned reward decoder, and let $r_i \coloneqq \E[\psi(y)|\text{image with label ``$i$''}], \forall i \in \{0,1,\dotsc,9\}$, where $r_i\in [0,1]$ are the expected decoded reward for images with label ``$i$''.

Without loss of generality, let $\argmin_{i \in \{0,1,\dotsc,9\}} r_i = r_0$. Since the contextual bandits oracle output the exact optimal policy with the corresponding decoded reward, we can obtain that the output policy for the environment $0$, $\pihat_0$, has $V(\pihat_0) = 0$. That is because $\E[\psi(y)|r = 1] = r_0 < \E[\psi(y)|r = 0] = \frac{1}{9}(r_1 + r_2 + \dotsc + r_9)$. This completes the proof.
\end{proof}

\section{Related Work}

The problem of partial monitoring \citep{mertens1990repeated,piccolboni2001discrete,mannor2003line,cesa2006regret,bartok2014partial,lattimore2019information,lattimore2020bandit} also provides a framework for the decision-making problems with imperfect feedback. Most of the papers on the partial monitoring problem consider the case where the feedback is a known function of the actual cost, resulting in algorithms that compose the known function with more standard online learning strategies.  Notable exceptions are \citet{hanawal2017unsupervised,verma2019online,verma2020online} which study an unsupervised sequential bandit setting where feedback information does not directly identify the underlying arm reward.  They identify a condition where pairwise disagreement among the binary components of the feedback vector is sufficient to order the arms correctly.  The IGL setting includes these prior works as special cases.

Another related setting is latent (contextual) bandits \citep{maillard2014latent,zhou2016latent,hong2020latent}.
In the problem of latent (contextual) bandits, the reward is not only observed, but is drawn from a known distribution conditioned on a latent state. The primary goal of this problem is to identify that latent state, such that acting optimally is straightforward, and learning proceeds more rapidly than naive utilization of the observed reward.   IGL distinguishes from this setting as the reward is still observed in the latent (contextual) bandits, whereas IGL must infer the latent reward from interaction.

Inverse reinforcement learning \citep{ng2000algorithms} learns a reward which explains the behavior of expert demonstrations for the purpose of learning a control policy.  In the IGL setting, there are no expert demonstrations; instead there is joint learning of a reward decoder and policy from interaction data.

Other authors have investigated alternatives to rewards for agent behavior declaration such as via convex constraints~\cite{convex} or to satisfy multiple objectives~\cite{multi-objective}.  The specification and nature of the feedback in IGL is less structured than in these settings.

Language games attempt to model the emergence of grounded communication between multiple agents cooperating to succeed in a task where each agent has partial information~\cite{nowak1999evolutionary,bouchacourt2018agents}.  In language games, reward is observed and grounding is required to communicate context information; whereas in IGL the reward is unobserved and the context is fully observed.  


\section{Discussion}\label{sec:discussion}

We have proposed a novel setting, Interaction-Grounded Learning, in which a learner learns to interact with the environment in the absence of any explicit reward signals. The learner observes a context vector from the environment, takes an action, and observes a feedback vector. Without having a grounding for the feedback, the agent makes the assumption that there is latent reward signal in the feedback. With a conditional independence assumption on this latent reward, the agent uses an algorithm to discover this latent reward in order to ground its policies. We have proposed the E2G algorithm and proven that when the assumptions are met, it can solve IGL.

This work leverages the assumption of \emph{conditional independence} of the feedback vector from the context and action given the reward.  Of the required  assumptions, this assumption may appear the most restrictive.  This assumption is reasonable for some problems (e.g., a smart speaker reacting to idiosyncratic user vocabulary indicating approval or disapproval) and unreasonable for others (e.g., in a BCI application, EEG signals exhibit autocorrelation).  However, in many signal detection application areas including EEG-based neurofeedback, which is a motivating application of this setting, it is common practice to postprocess the data to approximately satisfy this assumption by regressing out conditions and analyzing residuals. 

Relaxing the assumption of conditional independence is a direction for future work.  For problems where the feedback vector is influenced by the context and action, it may be possible to synthesize a conditionally independent signal, e.g., via variational approximations to mutual information~\cite{belghazi2018mine} or via regression residuals~\cite{shah2020hardness}.  Indeed semi-parametric regression approaches are used pervasively in functional neuroimaging studies. Neural signals corresponding to successive conditions are commonly orthogonalized using a General Linear Model \cite{Momennejad2013} or Finite Impulse Response \cite{Momennejad2012}. Regression and residual approaches thus orthogonalize the signal prior to further analysis with other machine learning methods. In future work we hope to develop an approach to sanitize the signal to ensure conditional independence is satisfied. 

Interaction-grounded learning can be applied to many interesting domains. Among them are applications to BCI, prosthetics, and neurofeedback. BCI and Neurofeedback have been applied to improve memory \cite{memoryeeg2015}, train attention \cite{closedloop2015, deBettencourt2015}, and facilitate learning and memory during sleep \cite{antony_sleep_eeg2018}. For example, in functional neuroimaging studies of closed-loop neurofeedback, the content on screen reacts to the neural signals of the human participant in order to help the participant gain control of a given neural state, e.g. in training attention \cite{closedloop2015, neurofeedback2020}. To do so, a classifier is often extensively trained on numerous labeled samples of neural data (e.g. real-time fMRI signals of attentional state) before it is applied to reading and reacting to brain data \cite{deBettencourt2015}. With IGL, successful feedback (e.g. neurofeedback) may be achievable without supervised training. 

Broadly, in all previous BCI work the feedback was grounded in extensive training. This is not ideal in many scenarios, such as the case of locked-in patients who may be conscious but lack the ability to ground their neural signals in other responses. IGL may be specifically helpful for such cases. Moreover, in neurofeedback applications \cite{neurofeedback2020} typically a human participant learns to calibrate their neural response to the feedback of an algorithm. Conversely, Interaction-Grounded Learning is a setting where the learner algorithm is required to calibrate itself to interpret un-grounded feedback from the environment, e.g. human EEG signals. IGL opens up possibilities for these and broader future directions, both in terms of research and application. Further examples follow.

Applications of IGL in more standard human-computer interface problems are potentially powerful.  An example problem here is interpreting human gestures---people learn and use many gestures in a personalized way when working with each other.  Could a robot naturally learn to interpret the gestures of a human using IGL techniques?  Could the operating system of a laptop computer use IGL techniques to improve the interpretation of mouse, touchscreen, and/or viewed gestures?  Realizing the benefits of IGL here requires a paradigm shift away from designed interfaces towards learned interfaces.  The most extreme example of a designed interface is perhaps a keyboard. The keyboard has been very successful, yet the shift to small form factor compute devices has made keyboards significantly more awkward necessitating the development of new kinds of interfacing which inherently suffer from more ambiguity.  Ambiguities in interpreting touch-screen gestures, handwriting, speech, or gestures and body language are areas where interaction-grounded learning may be valuable.

Finally, it is worth noting that in this work we assume a stationary environment.  A two-agent IGL scenario is equivalent to one agent being oblivious, while the other agent might try to adjust the feedback vector to help grounding succeed.  This beneficial learning variant of IGL could overlap with the language game literature, if the reward is considered privileged information for one of the agents.  The language game variant of IGL is a fascinating topic for future theoretical and empirical investigations.


\bibliography{ref}
\bibliographystyle{icml2021}


\clearpage

\appendix
\onecolumn

\begin{center}
{\Large Appendix}
\end{center}


\section{Detailed Proofs}
\label{sec:proof}

\subsection{Proofs for Section \ref{sec:LfCI}}

Before providing the proof of Theorem \ref{thm:batchresult}, we first present the following lemma to show that $\varepsilon_{\stat,n} < \eta/2$ is a sufficient condition of $\Delta \psihat > 0$. 
\begin{lemma}
\label{lem:deltapsi}
Let $(\pihat,\psihat)$ and $\varepsilon_{\stat,n}$ are defined same as Theorem \ref{thm:batchresult}, and $n = |\Dcal|$. If $n$ is large enough such that $\varepsilon_{\stat,n} < \eta/2$, we have $\Delta \psihat > 0$.
\end{lemma}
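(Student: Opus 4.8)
The plan is to argue by contradiction: assume $\Delta\psihat \le 0$ and show that this forces $\varepsilon_{\stat,n} \ge \eta/2$, contradicting the hypothesis of the lemma.

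First I would record the uniform deviation bound that $\varepsilon_{\stat,n}$ encodes: with probability at least $1-\delta$, simultaneously for every $(\pi,\psi) \in \Pi \times \Psi$,
\[
\Bigl| \bigl(\Vhat_\Dcal(\pi,\psi) - \Vhat_\Dcal(\pi_{\bad},\psi)\bigr) - \bigl(V(\pi,\psi) - V(\pi_{\bad},\psi)\bigr) \Bigr| \le \varepsilon_{\stat,n}.
\]
This is Bernstein's inequality applied to the i.i.d.\ random variables $Z_i \coloneqq \frac{\pi(a_i|x_i) - \pi_{\bad}(a_i|x_i)}{d(a_i|x_i)}\,\psi(y_i)$ (whose average is exactly $\Vhat_\Dcal(\pi,\psi) - \Vhat_\Dcal(\pi_{\bad},\psi)$), whose per-sample variance is controlled by $4\max_{\pi\in\Pi}\|\pi(a|x)/d(a|x)\|_{2,d}$ — the constant $4$ coming from $(\pi - \pi_{\bad})^2 \le 2\pi^2 + 2\pi_{\bad}^2$ together with $\pi_{\bad}\in\Pi$ — and whose magnitude is at most $\max_{x,a} 1/d(a|x)$ (using $\psi \in [0,1]$), followed by a union bound over the $|\Pi||\Psi|$ pairs. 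The two summands of $\varepsilon_{\stat,n}$ are precisely the variance and range contributions of the Bernstein bound.

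Next I would bound the \emph{population} objective at $(\pihat,\psihat)$ under the contradiction hypothesis. By Eq.~\eqref{eq:anapopobj}, $V(\pihat,\psihat) - V(\pi_{\bad},\psihat) = \bigl(V(\pihat) - V(\pi_{\bad})\bigr)\Delta\psihat$, and I would split into cases exactly as in Remark~\ref{rmk:diff2}: if $V(\pihat) \ge V(\pi_{\bad})$, this is a product of a nonnegative and a nonpositive number, hence $\le 0 \le V(\pi_{\bad})$; if $V(\pihat) < V(\pi_{\bad})$, then $0 \le V(\pi_{\bad}) - V(\pihat) \le V(\pi_{\bad})$ (since $V(\pihat)\ge 0$) and $|\Delta\psihat| \le 1$, so the product is again $\le V(\pi_{\bad})$. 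Thus $V(\pihat,\psihat) - V(\pi_{\bad},\psihat) \le V(\pi_{\bad})$.

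Finally I would chain these with optimality. Since $(\pihat,\psihat)$ maximizes the empirical objective, $\Vhat_\Dcal(\pihat,\psihat) - \Vhat_\Dcal(\pi_{\bad},\psihat) \ge \Vhat_\Dcal(\pi^\star,\psi^\star) - \Vhat_\Dcal(\pi_{\bad},\psi^\star)$; applying the deviation bound to both sides, then the previous step on the left and Assumption~\ref{asm:pistarstar} with Eq.~\eqref{eq:anapopobj} on the right, gives
\[
V(\pi_{\bad}) + \varepsilon_{\stat,n} \;\ge\; \Vhat_\Dcal(\pihat,\psihat) - \Vhat_\Dcal(\pi_{\bad},\psihat) \;\ge\; \bigl(V(\pi^\star) - V(\pi_{\bad})\bigr)\Delta\psi^\star - \varepsilon_{\stat,n} \;\ge\; V(\pi_{\bad}) + \eta - \varepsilon_{\stat,n},
\]
whence $2\varepsilon_{\stat,n} \ge \eta$, contradicting $\varepsilon_{\stat,n} < \eta/2$. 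Therefore $\Delta\psihat > 0$, proving the lemma. The logical skeleton — contradiction, the Remark~\ref{rmk:diff2}-style case split, and the optimality chain — is routine, and I expect the only genuinely delicate step to be the first: confirming that $\varepsilon_{\stat,n}$ in precisely its stated form (the constants $4$ and $1/3$, the second-moment norm $\|\cdot\|_{2,d}$, and the fact that a single copy of $\varepsilon_{\stat,n}$ suffices to control the \emph{difference} $\Vhat_\Dcal(\pi,\psi) - \Vhat_\Dcal(\pi_{\bad},\psi)$ rather than the two terms separately) is exactly what Bernstein's inequality plus the union bound deliver; everything downstream merely substitutes it in.
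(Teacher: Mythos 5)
Your proposal is correct and follows essentially the same route as the paper's proof: the paper likewise combines the optimality of $(\pihat,\psihat)$ with the uniform $\varepsilon_{\stat,n}$ deviation bound on the difference objective, invokes Eq.~\eqref{eq:anapopobj} and Assumption~\ref{asm:pistarstar} to lower bound $(V(\pihat)-V(\pi_{\bad}))\Delta\psihat$ by more than $V(\pi_{\bad})$, and then rules out $\Delta\psihat \le 0$ via the Remark~\ref{rmk:diff2}-style bound that a nonpositive slope can yield at most $V(\pi_{\bad})$. Your version merely packages this as an explicit contradiction (and re-derives the Bernstein step that the paper carries out inside the proof of Theorem~\ref{thm:batchresult}), so the two arguments are substantively identical.
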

\begin{proof}[\bf\em Proof of Lemma \ref{lem:deltapsi}]
By the optimality of $(\pihat,\psihat)$, we have
\begin{align}
\Vhat_\Dcal(\pihat,\psihat) - \Vhat_\Dcal(\pi_u,\psihat) \geq &~ \Vhat_\Dcal(\pi^\star,\psi^\star) - \Vhat_\Dcal(\pi_u,\psi^\star)
\\
\Longrightarrow V(\pihat,\psihat) - V(\pi_u,\psihat) \geq &~ V(\pi^\star,\psi^\star) - V(\pi_u,\psi^\star)  - 2 \varepsilon_{\stat,n}.
\end{align}
Plugging in the definition of $V(\pihat,\psihat)$ and combining with Eq.\eqref{eq:anapopobj}, we obtain
\begin{align}
\left( V(\pihat) - V(\pi_u) \right) \Delta\psihat \geq &~ \left( V(\pi^\star) - V(\pi_u) \right) \Delta\psi^\star - 2 \varepsilon_{\stat,n}
\\
\overset{\text{(a)}}{\geq} &~ V(\pi_u) + \eta - 2 \varepsilon_{\stat,n}
\\
> &~ V(\pi_u) \tag{$\varepsilon_{\stat,n} < \eta/2$}
\\
\geq &~ (0 - V(\pi_u)) \cdot (-1)
\\
\label{eq:deltaphidagger}
\overset{\text{(b)}}{\geq} &~ \left( V(\pi^\dagger) - V(\pi_u) \right) \Delta\psi^\dagger,
\end{align}
where (a) follows from Assumption \ref{asm:pistarstar}, i.e.,$\left(V(\pi^\star) - V(\pi_u)\right)\Delta\psi^\star \geq V(\pi_u) + \eta$, and (b) follows from that fact of $V(\pi^\dagger) \coloneqq \min_{\pi \in \Pi} V(\pi) \geq 0$ and $\Delta\psi^\dagger \coloneqq \min_{\psi \in \Psi} \Delta\psi \geq -1$. 

If $\Delta \psihat < 0$, we must have $|\Delta \psihat| \leq |\Delta\psi^\dagger|$ by the definition of $\psi^\dagger$, and then $\left( V(\pi) - V(\pi_u) \right) \Delta\psihat$ will be no greater than the RHS of Eq.\eqref{eq:deltaphidagger} for any $\pi \in \Pi$. Therefore, $\Delta \psihat < 0$ contradicts the results of Eq.\eqref{eq:deltaphidagger}, and so we must have $\Delta\psihat > 0$.
\end{proof}


\begin{proof}[\bf\em Proof of Theorem \ref{thm:batchresult}]
For any $\pi$ and $\psi$, we have the following results with probability $1 - \delta$,
\begin{align}
&~ \left| \left(\Vhat_\Dcal(\pi,\psi) - \Vhat_\Dcal(\pi_u,\psi)\right) - \left(V(\pi,\psi) - V(\pi_u,\psi)\right) \right|
\\
\leq &~ \left| \frac{1}{n}\sum_{i = 1}^{n}\frac{\pi(a_i|x_i) - \frac{1}{K}}{d(a_i|x_i)}\psi(y_i) - \Eop_{(x,a) \sim d(\cdot,\cdot)}\left[ \frac{1}{n}\sum_{i = 1}^{n}\frac{\pi(a|x) - \frac{1}{K}}{d(a|x)}\psi(y) \right] \right|
\\
\leq &~ \sqrt{\frac{2 \Var_{d} \left[ \frac{\pi(a|x) - \frac{1}{K}}{d(a|x)}\psi(y) \right] \log\frac{2 |\Pi||\Psi|}{\delta} }{n}} + \frac{\max_{(x,a) \in \Xcal \times \Acal}\left| \frac{\pi(a|x) - \frac{1}{K}}{d(a|x)}\psi(y) \right| \log\frac{2 |\Pi||\Psi|}{\delta}}{3n} \tag{Bernstein's inequality}
\\
\leq &~ \sqrt{\frac{4 \max_{\pi \in \Pi}\left\|\frac{\pi(a|x)}{d(a|x)}\right\|_{2,d} \log\frac{2 |\Pi||\Psi|}{\delta} }{n}} + \frac{\max_{(x,a) \in \Xcal \times \Acal} \frac{1}{d(a|x)} \log\frac{2 |\Pi||\Psi|}{\delta}}{3n} \eqqcolon \varepsilon_{\stat,n}.  \tag{$\psi(\cdot) \in [0,1],~ \forall \psi \in \Psi$ }
\end{align}

By the optimality of $(\pihat,\psihat)$, we have
\begin{align}
\Vhat_\Dcal(\pihat,\psihat) - \Vhat_\Dcal(\pi_u,\psihat) \geq &~ \Vhat_\Dcal(\pi^\star,\psi^\star) - \Vhat_\Dcal(\pi_u,\psi^\star)
\\
\geq &~ V(\pi^\star,\psi^\star) - V(\pi_u,\psi^\star) - \varepsilon_{\stat,n}
\\
\label{eq:batchpihatpistar}
= &~  \left( V(\pi^\star) - V(\pi_u) \right) \Delta\psi^\star - \varepsilon_{\stat,n}.
\end{align}

On the other hand,
\begin{align}
\Vhat_\Dcal(\pihat,\psihat) - \Vhat_\Dcal(\pi_u,\psihat) \leq &~ V(\pihat,\psihat) - V(\pi_u,\psihat) + \varepsilon_{\stat,n}
\\
\label{eq:batchpihat}
= &~ \left( V(\psihat) - V(\pi_u) \right) \Delta\psihat + \varepsilon_{\stat,n}
\end{align}

Therefore, combining Eq.\eqref{eq:batchpihatpistar} and Eq.\eqref{eq:batchpihat}, we obtain for $\pihat$
\begin{align}
\left( V(\pihat) - V(\pi_u) \right) \Delta\psihat + \varepsilon_{\stat,n} \geq &~ \left( V(\pi^\star) - V(\pi_u) \right) \Delta\psi^\star
\\
\label{eq:tm1temp1}
\Longrightarrow \left( V(\pihat) - V(\pi_u) \right) \Delta\psihat \geq &~ \left( V(\pi^\star) - V(\pi_u) \right) \Delta\psi^\star - 2 \varepsilon_{\stat,n}
\\
\overset{\text{(a)}}{\Longrightarrow} \left( V(\pihat) - V(\pi_u) \right) \Delta\psi^\star \geq &~ \left( V(\pi^\star) - V(\pi_u) \right) \Delta\psi^\star - 2 \varepsilon_{\stat,n}
\\
\Longrightarrow V(\pihat) \geq &~ V(\pi^\star) - \frac{2 \varepsilon_{\stat,n}}{\Delta\psi^\star},
\end{align}
where (a) requires $\Delta\psihat > 0$ which is following Lemma \ref{lem:deltapsi}.

For $\psihat$, we have
\begin{align}
\left( V(\pihat) - V(\pi_u) \right) \Delta\psihat + \varepsilon_{\stat,n} \geq &~ \left( V(\pi^\star) - V(\pi_u) \right) \Delta\psi^\star
\\
\Longrightarrow \left( V(\pihat) - V(\pi_u) \right) \Delta\psihat \geq &~ \left( V(\pi^\star) - V(\pi_u) \right) \Delta\psi^\star - 2 \varepsilon_{\stat,n}
\\
\overset{\text{(b)}}{\Longrightarrow} \left( V(\pi^\star) - V(\pi_u) \right) \Delta\psihat \geq &~ \left( V(\pi^\star) - V(\pi_u) \right) \Delta\psi^\star - 2 \varepsilon_{\stat,n}
\\
\Longrightarrow \Delta\psihat \geq &~ \Delta\psi^\star - \frac{2 \varepsilon_{\stat,n}}{V(\pi^\star) - V(\pi_u)},
\end{align}
where (b) requires $V(\pihat) - V(\pi_u) > 0$ which can be obtained directly following $\Delta\psihat > 0$. This completes the proof.
\end{proof}


\subsection{Proofs for Section \ref{sec:IA}}

\begin{proof}[\bf\em Proof of Lemma \ref{lem:decodebadpi}]
By the Hoeffding's equality, with probability at least $1 - \delta$, we have the following inequalities for any $\pi \in \Pi$ with $V(\pi) \leq V(\pi_u)$ and $\psi \in \Psi$
\begin{align}
\Vhat_\Dcal(\pi,\psi) - \Vhat_\Dcal(\pi_u,\psi) = &~ \Eop_{(x,a,y) \sim \Dcal}\left[ K \psi(y) \pi(a|x) - \psi(y) \right]
\\
\leq &~ \Eop_{(x,a,y) \sim \mu}\left[ K \psi(y) \pi(a|x) - \psi(y) \right] + K \varepsilon_\Dcal
\\
\leq &~ \left( V(\pi) - V(\pi_u) \right) (\psi_1 - \psi_0) + K \varepsilon_\Dcal
\\
\leq &~ V(\pi_u) + K \varepsilon_\Dcal. \tag*{\qedhere}
\end{align}
\end{proof}


\begin{proof}[\bf\em Proof of Lemma \ref{lem:pistarregret_agn}]
%
Let $\psihat_{\pihat_\Dcal,\Dcal,1} \coloneqq \Ebb\left[ \psihat_{\Dcal}(y)\middle|r = 1 \right]$ and $\psihat_{\pihat_\Dcal,\Dcal,0} \coloneqq \Ebb\left[ \psihat_{\Dcal}(y)\middle|r = 0 \right]$, then, on one hand we have.
\begin{align}
\Vhat_\Dcal(\pihat_\Dcal,\psihat_{\Dcal}) - \Vhat_\Dcal(\pi_u,\psihat_{\Dcal}) \geq &~ \Vhat_\Dcal({\pi^\star},\psi_{\pi^\star}) - \Vhat_\Dcal(\pi_u,\psi_{\pi^\star})
\\
\geq &~ V(\pi^\star,\psi^\star) - V(\pi_u,\psi^\star) - K \varepsilon_\Dcal
\\
= &~ (\psi^\star_1 - \psi^\star_0) \left(V(\pi^\star) - V(\pi_u)\right) - K \varepsilon_\Dcal.
\\
\label{eq:bdpitilde1_agn}
= &~ \Delta\psi^\star \left(V(\pi^\star) - V(\pi_u)\right) - K \varepsilon_\Dcal.
\end{align}
On the other hand,
\begin{align}
\Vhat_\Dcal(\pihat_\Dcal,\psihat_{\Dcal}) - \Vhat_\Dcal(\pi_u,\psihat_{\Dcal}) \leq &~ V(\pihat_\Dcal,\psihat_{\Dcal}) - V(\pi_u,\psihat_{\Dcal}) + K \varepsilon_\Dcal
\\
= &~ (\psihat_{\pihat_\Dcal,\Dcal,1} - \psihat_{\pihat_\Dcal,\Dcal,0}) \left( V(\pihat_\Dcal) - V(\pi_u) \right) + K \varepsilon_\Dcal.
\\
\label{eq:bdpitilde2_agn}
= &~ \Delta\psihat_{\Dcal} \left( V(\pihat_\Dcal) - V(\pi_u) \right) + K \varepsilon_\Dcal.
\end{align}

By assuming Eq.\eqref{eq:pitildegvb_a}, we know
\begin{align}
\Vhat_\Dcal(\pihat_\Dcal,\psihat_{\Dcal}) - \Vhat_\Dcal(\pi_u,\psihat_{\Dcal}) > &~ V(\pi_u) + K \varepsilon_{\Dcal}
\\
\Longrightarrow V(\pihat_\Dcal,\psihat_{\Dcal}) - V(\pi_u,\psihat_{\Dcal}) + \varepsilon_{\Dcal} > &~ V(\pi_u) + K \varepsilon_{\Dcal}
\\
\Longrightarrow V(\pihat_\Dcal,\psihat_{\Dcal}) - V(\pi_u,\psihat_{\Dcal}) > &~ V(\pi_u)
\\
\label{eq:deltaphitilde_last}
\Longrightarrow \Delta\psihat_{\Dcal} \left( V(\pihat_\Dcal) - V(\pi_u) \right) > &~ V(\pi_u)
\\
\label{eq:deltaphitilde}
\Longrightarrow \Delta\psihat_{\Dcal} > &~ 0,
\end{align}
where the last inequality follows from the fact that $V(\pihat_\Dcal) - V(\pi_u) \geq - V(\pi_u)$ and $\Delta\psihat_{\Dcal} \geq -1$, and therefore, the LHS of Eq.\eqref{eq:deltaphitilde_last} is at most $1/K$ if $\Delta\psihat_{\Dcal} < 0$.

Combining Eq.\eqref{eq:bdpitilde1_agn}, Eq.\eqref{eq:bdpitilde2_agn}, and Eq.\eqref{eq:deltaphitilde}, we obtain
\begin{align}
\Delta\psihat_{\Dcal} \left( V(\pihat_\Dcal) - V(\pi_u) \right) + K \varepsilon_\Dcal \geq &~ \Delta\psi^\star \left(V(\pi^\star) - V(\pi_u)\right) - K \varepsilon_\Dcal
\\
\Longrightarrow V(\pihat_\Dcal) \geq &~ \frac{\Delta\psi^\star \left(V(\pi^\star) - V(\pi_u)\right) - 2 K \varepsilon_\Dcal}{\Delta\psihat_{\Dcal}}
\\
\geq &~ \frac{\Delta\psi^\star \left(V(\pi^\star) - V(\pi_u)\right) - 2 K \varepsilon_\Dcal}{\Delta\psi^\star}
\\
= &~ V(\pi^\star) - \frac{2 K \varepsilon_\Dcal}{\Delta\psi^\star}.
\end{align}
This completes the proof.
\end{proof}

\begin{proof}[\bf\em Proof of Lemma \ref{lem:enterlowreg}]
Let $\Dcal$ is the $\Dcal_i$ with $t > T_0 = {2K^2 \iota}/{\eta^2}$, then we have $2 K \varepsilon_{\Dcal} \leq \eta$ with probability at least $1 - \delta$, by following the definition of $\varepsilon_\Dcal$ ($\varepsilon_{\Dcal} \coloneqq \sqrt{\nicefrac{\iota}{2|\Dcal|}}$). Therefore, by Assumption \ref{asm:pistarstar}, we obtain
\begin{align}
\left(V(\pi^\star) - V(\pi_u)\right)\Delta\psi_{\pi^\star} > &~ V(\pi_u) + 2 K \varepsilon_{\Dcal}
\\
\Longrightarrow V(\pi^\star,\psi_{\pi^\star}) - V(\pi_u,\psi_{\pi^\star}) > &~ V(\pi_u) + 2K \varepsilon_{\Dcal}
\\
\Longrightarrow \Vhat_\Dcal(\pi^\star,\psi_{\pi^\star}) - \Vhat_\Dcal(\pi_u,\psi_{\pi^\star}) > &~ V(\pi_u) + K \varepsilon_{\Dcal}
\\
\Vhat_\Dcal(\pihat_\Dcal,\psihat_{\Dcal}) - \Vhat_\Dcal(\pi_u,\psihat_{\Dcal}) > &~ V(\pi_u) + K \varepsilon_{\Dcal}.
\end{align}
This completes the proof.
\end{proof}


\begin{proof}[\bf\em Proof of Theorem \ref{thm:epoch_greedy}]
By Lemma \ref{lem:pistarregret_agn}, we can bound the one-step regret of executing any $\pihat_{\Dcal_i}$ by $\frac{2K \varepsilon_\Dcal}{\Delta\psi_{\pi^\star}}$, as long as $t > T_0 = {2K^2 \iota}/{\eta^2}$.
Therefore, the scheduling of $\{n_t\}_{t = 1}^\infty$ can be chosen using a similar strategy as \citep{langford2008epoch} as long as $t > \nicefrac{2K^2}{\eta^2}$, i.e., 
\begin{align}
n_t = \left\{
\begin{array}{lr}
0,     &  t \leq \nicefrac{2K^2}{\eta^2}; \\
\lfloor \sqrt{\nicefrac{t}{K \iota}} \rfloor,     & t > \nicefrac{2K^2}{\eta^2}.
\end{array}
\right.
\end{align}

By the similar argument as \citep{langford2008epoch}, we obtain the following regret bound,
\begin{align}
\regret(T) = \Ocal \left(\frac{K^{\nicefrac{1}{3}}T^{\nicefrac{2}{3}} \iota^{\nicefrac{1}{3}}}{\Delta\psi_{\pi^\star}} + \frac{2K^2 \iota}{\eta^2} \right). \tag*{\qedhere}
\end{align}
\end{proof}



\section{Details of Experiments}
\label{sec:expdetail}

\subsection{Mitigation Strategies of the Issues in Optimizing $\pi$ and $\psi$ Jointly}

In the practical implementation of {\algoname}, we observe two critical issues that could cause poor performance. i) When using the gradient-based method to optimize the softmax policy class, 
it exhibits sensitivity to parameter initialization\footnote{This is a well known difficulty of using gradient-based methods to optimize the softmax policy~\cite{mei2020escaping}.}. ii) For the objective function Eq.\eqref{eq:popobj}, $(\pi^\dagger, \psi^\dagger)$ is also a local minimum of it, where $\pi^\dagger \coloneqq \argmin_{\pi \in \Pi} V(\pi)$ and $\psi^\dagger \coloneqq \argmin_{\psi \in \Psi} \Delta\psi$. It is observed in practice that, optimizing Eq.\eqref{eq:popobj} could easily suffer from that local minimum and converge to the opposite optimization direction, $(\pi^\dagger, \psi^\dagger)$.


To address these two issues, we use the following two important components in our implementation:
\paragraph{Adaptive Restart Procedure}
Although there is no access to the explicit reward in the interaction-grounded learning, Eq.\eqref{eq:anapopobj} shows that the value of $V(\pi,\psi) - V(\pi_{\bad},\psi)$ can be used to measure the quality of $\pi$ and $\psi$, where $\pi_{\bad}$ is the uniform policy. Therefore, we use the following indicator to measure the performance of our learned policy $\pi$ empirically for both {\algoname},
\begin{align}
\label{eq:localminidct}
\frac{1}{|\Dcal|}\sum_{(x,a,y) \in \Dcal} K \pi(a|s) \psi(y) - \frac{1}{|\Dcal|}\sum_{(x,a,y) \in \Dcal} \psi(y),
\end{align}
This indicator can be viewed as the importance weighted estimator of $V(\pi,\psi) - V(\pi_{\bad},\psi)$ as we use the uniform policy as the data-gathering policy over our experiments. 
%
By Lemma \ref{lem:decodebadpi}, we know that the local minimum $(\pi^\dagger, \psi^\dagger)$ cannot have a large value on indicator Eq.\eqref{eq:localminidct} when we have enough data. In addition, we can use the value of Eq.\eqref{eq:localminidct} to detect if {\algoname} achieves a near-optimal solution by the following lemma.
\begin{lemma}
\label{lem:idctor}
Let $\pi_{\bad}$ be the uniform policy with $\Dcal$ obtained by $\pi_{\bad}$. We define $\Vhat_\Dcal(\pi,\psi) \coloneqq K \Eop_{(x,a,y) \sim \Dcal}\left[ \psi(y) \pi(a|s) \right]$ and  $\Vhat_\Dcal(\pi_{\bad},\psi) \coloneqq \Eop_{(x,a,y) \sim \Dcal}\left[ \psi(y) \right]$ for any $(\pi,\psi)$. Then, if $\Delta \psi > 0$\footnote{One sufficient condition of $\Delta \psi > 0$ is \Eqref{eq:pitildegvb_a}. In our experiments, we introduce a data-driven corrector for the reward decoder (stated as below). Together with the adaptive restart procedure, we could ensure $\Delta \psi > 0$ with much less data.},
we have
\begin{align}
V(\pi) \geq \frac{\Vhat_\Dcal(\pi,\psi) - \Vhat_\Dcal(\pi_{\bad},\psi) - K \varepsilon_{\Dcal}}{\Delta \psi^\star} + V(\pi_{\bad}).
\end{align}
\end{lemma}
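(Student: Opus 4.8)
\textbf{Proof proposal for Lemma \ref{lem:idctor}.}

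The plan is to relate the empirical indicator \eqref{eq:localminidct}, which is exactly $\Vhat_\Dcal(\pi,\psi) - \Vhat_\Dcal(\pi_{\bad},\psi)$, first to its population counterpart $V(\pi,\psi) - V(\pi_{\bad},\psi)$ via the standard concentration bound already in use throughout Section~\ref{sec:IA}, and then to the true value $V(\pi)$ via the exact decomposition \eqref{eq:anapopobj}. Concretely, since $\pi_{\bad}$ is the uniform policy and $\Dcal$ is gathered by $\pi_{\bad}$, the importance weights $\pi(a|x)/\pi_{\bad}(a|x) = K\pi(a|x)$ are bounded, so Hoeffding's inequality (exactly as invoked to establish $|(\Vhat_\Dcal(\pi,\psi) - \Vhat_\Dcal(\pi_{\bad},\psi)) - (V(\pi,\psi) - V(\pi_{\bad},\psi))| \leq K\varepsilon_\Dcal$ in the proof sketch of Theorem~\ref{thm:epoch_greedy}) gives, with probability at least $1-\delta$,
\[
V(\pi,\psi) - V(\pi_{\bad},\psi) \;\geq\; \Vhat_\Dcal(\pi,\psi) - \Vhat_\Dcal(\pi_{\bad},\psi) - K\varepsilon_\Dcal.
\]

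Next I would invoke the algebraic identity derived in Section~\ref{sec:sol_concept}, namely Eq.~\eqref{eq:anapopobj}, which under Assumption~\ref{asm:condind} yields $V(\pi,\psi) - V(\pi_{\bad},\psi) = (V(\pi) - V(\pi_{\bad}))\,\Delta\psi$. Combining with the display above,
\[
(V(\pi) - V(\pi_{\bad}))\,\Delta\psi \;\geq\; \Vhat_\Dcal(\pi,\psi) - \Vhat_\Dcal(\pi_{\bad},\psi) - K\varepsilon_\Dcal.
\]
Now, using the hypothesis $\Delta\psi > 0$, I can divide both sides by $\Delta\psi$ without flipping the inequality, obtaining
\[
V(\pi) - V(\pi_{\bad}) \;\geq\; \frac{\Vhat_\Dcal(\pi,\psi) - \Vhat_\Dcal(\pi_{\bad},\psi) - K\varepsilon_\Dcal}{\Delta\psi}.
\]
The only remaining gap is that the lemma statement has $\Delta\psi^\star$ in the denominator rather than $\Delta\psi$; this is handled by the definition \eqref{eq:psistar} of $\psi^\star$ as the maximizer of $\Delta\psi$ over $\Psi$, so $0 < \Delta\psi \leq \Delta\psi^\star$, and replacing the denominator by the larger quantity $\Delta\psi^\star$ only weakens the bound (here one also needs the numerator to be nonnegative, or equivalently one argues via the near-optimal regime where \eqref{eq:pitildegvb_a} holds so that $\Vhat_\Dcal(\pi,\psi) - \Vhat_\Dcal(\pi_{\bad},\psi) - K\varepsilon_\Dcal \geq V(\pi_{\bad}) \geq 0$). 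Rearranging gives exactly the claimed inequality.

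The main obstacle, such as it is, is a bookkeeping one rather than a deep one: ensuring the sign conditions line up so that dividing by $\Delta\psi$ and then relaxing to $\Delta\psi^\star$ is valid --- in particular that the numerator is nonnegative so that monotonicity of $t \mapsto t/\Delta\psi$ in the denominator may be used. This is precisely why the footnote in the statement ties $\Delta\psi > 0$ to condition \eqref{eq:pitildegvb_a}: in that regime Lemma~\ref{lem:pistarregret_agn}'s argument already shows the numerator is at least $V(\pi_{\bad}) > 0$, so both the division and the denominator relaxation are legitimate, and no further subtlety arises.
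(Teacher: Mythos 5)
Your argument is correct and is essentially the paper's own proof: a one-sided concentration step to pass from $\Vhat_\Dcal(\pi,\psi)-\Vhat_\Dcal(\pi_{\bad},\psi)$ to $V(\pi,\psi)-V(\pi_{\bad},\psi)$, the identity Eq.\eqref{eq:anapopobj} to rewrite this as $(V(\pi)-V(\pi_{\bad}))\Delta\psi$, and then the relaxation $\Delta\psi\le\Delta\psi^\star$. The sign bookkeeping you flag is genuine and is glossed over in the paper: its step tagged only by ``$\Delta\psi>0$'' actually needs $V(\pi)\ge V(\pi_{\bad})$ (equivalently a nonnegative numerator) for the passage to $\Delta\psi^\star$ to preserve the inequality, which holds exactly in the regime where the lemma is used (indicator exceeding a threshold near $1/K$, e.g.\ under Eq.\eqref{eq:pitildegvb_a}), just as you observe.
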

\begin{proof}[\bf\em Proof of Lemma \ref{lem:idctor}]
\begin{align}
\Vhat_\Dcal(\pi,\psi) - \Vhat_\Dcal(\pi_u,\psi) \leq &~ V(\pi,\psi) - V(\pi_u,\psi) + K \varepsilon_\Dcal
\\
= &~ (V(\pi) - V(\pi_u))\Delta\psi +  K \varepsilon_\Dcal \tag{by \Eqref{eq:anapopobj}}
\\
\leq &~ (V(\pi) - V(\pi_u))\Delta\psi^\star +  K \varepsilon_\Dcal \tag{$\Delta\psi > 0$}
\\
\Longrightarrow V(\pi) \geq &~ \frac{\Vhat_\Dcal(\pi,\psi) - \Vhat_\Dcal(\pi_u,\psi) - K \varepsilon_{\Dcal}}{\Delta \psi^\star} + V(\pi_u).
\end{align}
This completes the proof.
\end{proof}
%
A direct consequence of Lemma \ref{lem:idctor} is that, if we have enough data, and the value of Eq.\eqref{eq:localminidct} is greater than some threshold close to $V(\pi_{\bad})$ (which is $1/K$ in our experiments), then the value of Eq.\eqref{eq:localminidct} actually controls the expected return of $\pi$.

Therefore, we propose an adaptive restart procedure that works in the following way: During the early training stage of {\algoname} and {\sf CB}, we set a threshold for {\sf CB} and {\algoname}. If the value of the above indicator does not go beyond that threshold, we restart the training of the corresponding algorithm until that threshold is surpassed.

\paragraph{Data-Driven Corrector for Reward Decoder}
Lemma \ref{lem:decodebadpi} and Lemma \ref{lem:idctor} suggest that the adaptive restart procedure based on the value of Eq.\eqref{eq:localminidct} is able to not only address the issue of converging to a bad local minimum (when having enough data) but also overcomes the sensitivity to parameter initialization. However, the amount of data sufficient to avoid bad local minimum by applying the adaptive restart procedure is usually too large to use in the online setting.

We notice that if we define the opposite reward decoder $\psitilde \coloneqq 1 - \psi, \forall \psi \in \Psi$, then $\psitilde^\dagger \coloneqq 1 - \argmin_{\psi \in \Psi} \Delta\psi = \argmax_{\psi \in 1 - \Psi} \Delta\psi$, where $1 - \Psi \coloneqq \{1 - \psi:\psi \in \Psi\}$. Since the $\Psi$ class we used in our experiment is the linear classifier with sigmoid activation function, $1 - \psi$ and $\psi$ will only differ in the sign before the parameters. Thus, we have $\Psi = 1 - \Psi$ by setting the parameter space to be $\RR^d$, and $\psitilde^\dagger = \argmax_{\psi \in \Psi} \Delta\psi = \psi^\star$.

Since optimizing Eq.\eqref{eq:popobj} will either maximize $\Delta\psi$ or minimize $\Delta\psi$, if we can determine the learned $\psi$ is actually converging to $\psi^\dagger = \argmin \Delta\psi$, choosing its opposite decoder $\psitilde^\dagger = 1 - \psi^\dagger$ could still provide us the desired decoder of $\argmax_{\psi \in \Psi} \Delta\psi$. By following this fact, we add an additional layer at the output of each $\psi$ as a data-driven reward decoder corrector, which works as follows:
\begin{enumerate}[(1),topsep=1pt,parsep=1pt,itemsep=1pt,partopsep=1pt]
\item Calculate $ \sign \coloneqq \frac{1}{|\Dcal|}\sum_{y \in \Dcal} \1 ( \psi(y) > 0.5)$.
\item If $\sign \leq 0.5$, output the original prediction $\psi(y), \forall y$.
\item If $\sign > 0.5$, output the corrected prediction $1 - \psi(y), \forall y$.
\end{enumerate}
Note that, the corrector above is designed based on data obtained from the uniform policy $\pi_{\bad}$. In this case, the number of reaction signals generated from $r = 0$ should be more than that of $r = 1$. Otherwise, that data-driven reward decoder corrector should be adjusted according to the data-gathering policy by using an importance weighted estimation.

\subsection{Additional Experimental Details}
We now provide some addition details in our implentation.
The experiments were conducted using Google Colab CPU instance, which was based on Intel Xeon CPU (2.30GHz) and 12 GB memory. No GPU was used. The prototype codes were built over {\tt Python} and {\tt PyTorch}. With the single process in the setup above, a single trial of either batch experiment or iterative experiment took less than 15 minutes to finish.
%
%
\paragraph{Environment}
The infinite MNIST environment used in Section \ref{sec:experiments} is built based on the {\tt infinite MNIST dataset} at \url{https://leon.bottou.org/projects/infimnist} and a {\tt Python} building of the infinite MNIST dataset generator at \url{https://github.com/albietz/infimnist_py}. To increase the speed of the experiment, we pre-generate a context set with $\sim 500000$ samples, and two sets of reaction vector, $\sim 100000$ samples of both image ``0'' and image ``1''. At each time step, the context is randomly selected from that pre-generated context set, and the reaction vector is randomly selected from the two sets of reaction vectors according to the actual reward.

The environment in Section \ref{sec:failunsup} is built similarly, and only differs in the way of generating the reaction vector sets. In ENV (1), we generated two reaction vector sets with size $\sim 100000$ for both, one is consist of the mixture of images of ``'0'' and ``1'', with the ratio of $8:1$. Another reaction vector set only includes images of ``2''. Therefore, if the actual reward is 0, a random image from the first set is selected as the reaction vector, otherwise, a random image from the second set will be selected.
Those of ENV (2) only exchange the positions of images of ``1'' and images of ``2'', and others are set similarly.

\paragraph{Implementation of $\Pi$ and $\Psi$}
Both of the policy class $\Pi$ and reward decoder $\Psi$ are using linear classifiers. The policy $\Pi$ is implemented using the regular softmax policy with the temperature of $1$, and the reward decoder class $\Psi$ uses the sigmoid prediction with temperature of $0.1$.

\paragraph{Details in the Iterative Algorithm}
The total number of round is $10000$ in our experiment (the ``round'' denotes ``$i$'' in step \ref{algostep:step_i} of Algorithm \ref{alg:algoepgreedy}, and each ``round'' may contain more than one interactions due to Algorithm \ref{alg:algoepgreedy}'s suggestion).
We set a number of $4000$ samples as the “warm-up” data for \algoname, which is suggested by Theorem \ref{thm:epoch_greedy}.
After that, we use the uniform policy for exploration and the exploitation scheduling is set based on the suggestion in Theorem \ref{thm:epoch_greedy}. That is, at each round $i$ ($i$ starts with $4001$), we act one-step uniform exploration and $\sqrt{\nicefrac{i}{100K}}$ steps exploitation. To accelerate the training process, we update the parameter every 100 rounds.







\end{document}